%
\documentclass[runningheads]{llncs}
\usepackage{amsmath, amsfonts, amssymb, amstext, bbm, bbold, mathtools}
\usepackage[T1]{fontenc}
\usepackage{graphicx}
\usepackage[colorlinks=true,linkcolor=black,anchorcolor=black,citecolor=black,filecolor=black,menucolor=black,runcolor=black,urlcolor=black,pdfencoding=auto]{hyperref}

\usepackage[ruled,vlined]{algorithm2e}
\usepackage{physics}
\usepackage{caption}
\usepackage{subcaption}
\usepackage[dvipsnames,table,usenames]{xcolor}
\usepackage{soul} 
\usepackage{todonotes}
\usepackage{cleveref}
\usepackage{etoolbox}
\usepackage[misc]{ifsym}


\DeclareMathOperator*{\argmax}{arg\,max}
\DeclareMathOperator*{\argmin}{arg\,min}
\def\x{{\bf x}}

\def\A{{\bf A}}

\def\D{{\bf D}}
\def\y{{\bf y}}

\def\I{{\bf I}}

\def\S{{\bf S}}

\def\Rset{\mathbb R}

\def\thetabf{{\boldsymbol \theta}}
\def\alphabf{\boldsymbol{\alpha}}

\def\etabf{\boldsymbol{\eta}}
\def\epsilonbf{\boldsymbol{\epsilon}}
\def\gammabf{\boldsymbol{\gamma}}

\def \ie{\emph{i.e. }}

\begin{document}
%
\title{Hypothesis Transfer in Bandits by Weighted Models}

\titlerunning{Hypothesis Transfer in Bandits by Weighted Models}

\toctitle{Hypothesis~Transfer~in~Bandits~by~Weighted~Models}

\author{Steven Bilaj(\Letter)\inst{1} \and
Sofien Dhouib\inst{1} \and
Setareh Maghsudi\inst{1}}
\authorrunning{S. Bilaj et al.}
\tocauthor{Steven~Bilaj}
%
\institute{Eberhard Karls University of Tübingen, Tübingen, Germany\\\email{\{steven.bilaj, sofiane.dhouib, setareh.maghsudi\}@uni-tuebingen.de}
}
\maketitle        
\begin{abstract}
We consider the problem of contextual multi-armed bandits in the setting of hypothesis transfer learning. That is, we assume having access to a previously learned model on an unobserved set of contexts, and we leverage it in order to accelerate exploration on a new bandit problem. Our transfer strategy is based on a re-weighting scheme for which we show a reduction in the regret over the classic Linear UCB when transfer is desired, while recovering the classic regret rate when the two tasks are unrelated. We further extend this method to an arbitrary amount of source models, where the algorithm decides which model is preferred at each time step. Additionally we discuss an approach where a dynamic convex combination of source models is given in terms of a biased regularization term in the classic LinUCB algorithm. The algorithms and the theoretical analysis of our proposed methods substantiated by empirical evaluations on simulated and real-world data.

\keywords{Multi-Armed Bandits \and Linear Reward Models  \and Recommender Systems \and Transfer Learning.}
\end{abstract}
\section{Introduction}
The \emph{multi-armed bandit} problem (MAB) \cite{thompson1933likelihood,robbins1952some,bush1953stochastic} revolves about maximizing the reward collected by playing actions from a predefined set, with uncertainty and limited information about the observed payoff. At each round, the bandit player chooses an arm according to some rule that balances the exploitation of the currently available knowledge and the exploration of new actions that might have been overlooked while being more rewarding. This is known as the exploration-exploitation trade-off. MAB's find applications in several areas \cite{bouneffouf2020survey}, notably in recommender systems \cite{li2010contextual,zhou2017large,liu2018transferable,labille2021transferable}.
In these applications, the number of actions to choose from can grow very large, and it becomes provably detrimental to the algorithm's performance to ignore any side information provided when playing an action or dependence between the arms \cite{auer2002nonstochastic}. Considering such information defines the \emph{Stochastic Contextual Bandits} \cite{langford2007epoch,li2010contextual,chu2011contextual,abbasi2011improved} setting, where playing an action outputs a context-dependent reward, where a context can correspond to a user's profile and/or the item to recommend in recommender system applications.
Hence, less exploration is required as arms with correlating context vectors share information, thus further reducing uncertainty in the reward estimation. This ultimately led to lower regret bounds and improved performance \cite{abbasi2011improved}. 

While the stochastic contextual bandit problem solves the aforementioned issues, it disregards the possibility of learning from previously trained bandits. For instance, assume a company deploys its services in a new region. Then it would waste the information it has already learned from its previous recommending experience if it is not leveraged to accelerate the recognition of the new users' preferences. Such scenarios have motivated transfer learning for bandits \cite{soare2014multi,liu2018transferable,suk2021self,labille2021transferable}, which rely on the availability of contexts of the previously learned tasks to the current learner. However, regarding a setup where context vectors correspond to items which have been selected by a user, privacy issues are encountered in healthcare applications \cite{stark2019literature,ras2021recommender} for instance, the aim being to recommend a treatment based on a patient's health state. Indeed, accessing the contexts of the previous tasks entails the history of users' previous activities. Moreover, in engineering applications such as scheduling of radio resources \cite{amrallah2020radio}, storage issues \cite{maiti2021multi,liau2018stochastic,xu2021memory} might arise when needing access to the context history of previous tasks. These problems would render algorithms depending on previous tasks' contexts inapplicable.

In this work, we aim to reduce exploration by exploiting knowledge from a previously trained contextual bandit accessible only through its parameters, thus accelerating learning if such model is related to the one at hand, and ultimately decreasing the regret. We extend this idea by including an arbitrary amount of models increasing the likelihood of including useful knowledge.
To summarize our contributions, we propose a variation of the Linear Upper Confidence Bound (\emph{LinUCB}) algorithm, which has access to previously trained models called source models. The knowledge transfer takes place by using an evolving convex combination of sources models and a LinUCB model, called a target model, estimated with the collected data. The combination's weights are updated according to two different weighting update strategies which minimize the required exploration factor and consecutively the upper regret bound, while also taking a lack of information into consideration. 
Our regret bound is at least as good as the classic LinUCB one \cite{abbasi2011improved}, where the improvement depends on the quality of the source models. Moreover, we prove that if the source model used for transfer is not related to our problem, then it will be discarded early on and we recover the LinUCB regret rate. In other words, our algorithm is immune against negative transfer. We test our algorithm on synthetic and real data sets and show experimentally how the overall regret improves on the classic model.\\

The rest of the paper is organized as follows. We discuss related work in \Cref{sec:relatedwork} and formulate our problem in \Cref{sec:problemFormulation}, then we provide and analyse our weighting solution in \Cref{sec:weightedLinBandits}. This is followed by an extension to the case where one has access to more than one trained model in \Cref{sec:multi_source}. Finally, the performance of our algorithm is assessed in \Cref{sec:experiments}.
\section{Related Work}\label{sec:relatedwork}
We hereby discuss two families of contributions related to ours, namely transfer for multi-armed bandits, and hypothesis transfer learning.
\subsubsection{Transfer for MAB's} To the best of our knowledge, tUCB \cite{azar2013sequential} is the first algorithm to tackle transfer in an MAB setting. Given a sequence of bandit problems picked from a finite set, it uses a tensor power method to estimate their parameters in order to transfer knowledge to the task at hand, leading to a substantial improvement over UCB. Regarding the richer contextual MAB setting, MT-LinUCB \cite{soare2014multi} reduces the confidence set of the reward estimator by using knowledge from previous episodes.  More recently, transfer for MAB's has been applied to recommender systems \cite{liu2018transferable,labille2021transferable}, motivated by the cold start problem where a lack of initial information requires more exploration at the cost of higher regret. The TCB algorithm \cite{liu2018transferable} assumes access to correspondence knowledge between the source and target tasks, in addition to contexts, and achieves a regret of $O(d\sqrt{n\log{n}})$ as in the classic LinUCB case, with empirical improvement. The same regret rate holds for the T-LinUCB algorithm \cite{labille2021transferable}, which exploits prior observations to initialize the set of arms, in order to accelerate the training process. 
The main difference of our formulation with respect to the previous ones is that we assume having access only the the preference vectors of the previously learnt tasks, without their associated contexts, which goes in line with the Hypothesis Transfer Learning setting. Even with such a restriction, we keep the LinUCB regret rate and we show that the regret is lower in the case source parameters that are close to those of the task at hand.

\subsubsection{Hypothesis Transfer Learning} Using previously learned models in order to improve learning on a new task defines the hypothesis transfer learning scenario, also known as model reuse or learning from auxiliary classifiers. Some lines of work consider building the predictor of the task at hand as the sum of a source one (possibly a weighted combination of different models) and the one learned from the available data points \cite{yang2007cross,duan2009domain,tommasi2014learning}. Such models were thoroughly analyzed in \cite{kuzborskij2013stability,kuzborskij2017fast,perrot2015theoretical} by providing performance guarantees. The previously mentioned additive form of the learned model was further studied and generalized to a large family of transformation functions in \cite{du2017hypothesis}. In online learning, the pioneering work of \cite{zhao2014online} relies on a convex combination instead of a sum, with adaptive weights. More recently,  the \texttt{Condor} algorithm \cite{zhao2020handling} was proposed and theoretically analyzed to handle the concept drift scenario, relying on biased regularization w.r.t. a convex combination of source models. Our online setting involves transfer with  decisions over a large set of alternatives at each time step, thus it becomes crucial to leverage transfer to improve exploration. To this end, we use a weighting scheme inspired by \cite{zhao2014online} but that relies on exploration terms rather than on how the models approximate the rewards.

\section{Problem Formulation}\label{sec:problemFormulation}
We consider a contextual bandit setting in which at each time $k$, playing an action $a$ from a set $\mathcal A$ results in observing a context vector $\x_{a_k} \in \Rset^d$ assumed to satisfy $\norm{\x_{a_k}}\leq 1$ , in addition to a reward $r(k)$. We further define the matrix induced norm: $\norm{\x}_{\A}\coloneqq\sqrt{\x^T\A\x}$ for any vector $\x\in \Rset^d$ and any matrix $\A\in\Rset^{d\times d}$. The classical case aims to find an estimation $\hat{\thetabf}$ of an optimal bandit parameter $\thetabf^* \in \Rset^d$ which determines the rewards $r$ of each arm with context vector $\x_a$ in a linear fashion $r=\x_a^T\thetabf^*+\epsilon$ up to some $\sigma$-subgaussian noise $\epsilon$. The decision at time $k$ is made according to an upper confidence bound (UCB) associated to $\hat{\thetabf}(k)$:
\begin{equation}
    a_k =\argmax_{a\in\mathcal A} \x^T_a\hat{\thetabf}(k)+\gamma\sqrt{\x_a^T\A^{-1}(k)\x_a},
    \label{classic_linucb}
\end{equation}

where $\gamma>0$ is a hyperparameter estimated through the derivation of the UCB later and $\A(k)\coloneqq\lambda\I_d+\sum_{k'=1}^{k}\x_{a_{k'}}\x_{a_{k'}}^T$. The latter term in the sum \eqref{classic_linucb} represents the exploration term which decreases the more arms are explored. $\hat{\thetabf}(k)$ is computed through regularized least-squares regression with regularization parameter $\lambda>0$: $\hat{\thetabf}(k)=\A^{-1}(k)\D^T(k) \y(k)$,  with $\D(k)=[\x^T_{a_i}]_{i\in\{1,...,k\}}$ and $\y(k)=[r(i)]_{i\in\{1,...,k\}}$ as the concatenation of selected arms' context vectors and corresponding rewards respectively. We alter this decision making approach with the additional use of a previously trained source bandit. Inspired by \cite{zhao2014online}, we transfer knowledge from one linear bandit model to another by a weighting approach. We denote the parameters of the source bandit by $\thetabf_S \in \Rset^d$. The bandit at hand's parameters are then estimated as:
\begin{equation}
    \hat{\thetabf}=\alpha_{S} \thetabf_S + \alpha_{T} \hat{\thetabf}_{T}(k),
\end{equation}
with weights $\alpha_{S},\alpha_{T}\geq 0$ satisfying $\alpha_{S}+\alpha_{T} = 1$. More important is how the exploration term changes and how it affects the classic regret bound. From \cite{abbasi2011improved} we know that the upper bound of the immediate regret in a linear bandit algorithm directly depends on the exploration term of the UCB. We aim to reduce the required exploration with the use of the source bandits knowledge, in order to accelerate the learning process as well as reducing the upper regret bound. For the analysis we consider the pseudo-regret \cite{audibert2009exploration} defined as:

\begin{equation}
    R(n)=n \max_{a \in\mathcal A} \x_{a}^T\theta^* - \sum_{k=1}^n\x_{a_k}^T\thetabf^*.
\end{equation}
Our goal is to prove that this quantity is reduced if the source bandit is related to the one at hand, whereas its rate is not worsened in the opposite case.
\section{Weighted Linear Bandits}\label{sec:weightedLinBandits}
The model we use features dynamic weights, thus at time $k$, we use the following model for our algorithm:

\begin{equation}
    \hat{\thetabf}(k)=\alpha_{S}(k)\thetabf_S+\alpha_{T}(k)\hat{\thetabf}_{T}(k),
\end{equation}
with $\hat{\thetabf}_{T}(k)$ being updated like in the classic LinUCB case \cite{abbasi2011improved} and $\thetabf_S$ remaining constant. To devise an update rules of the weights, we first re-write the new UCB expression as:
\begin{equation}
    \mathrm{UCB}(a)=\x_a^T\left(\alpha_{S}(k)\thetabf_S+\alpha_{T}(k)\hat{\thetabf}_{T}(k)\right)+\left(\alpha_{S}(k)\gamma_S+\alpha_{T}(k)\gamma_T\right)\norm{\x_a}_{\A^{-1}},
    \label{new_LinUCB}
\end{equation}

with $\gamma_S\geq\norm{\thetabf^*-\thetabf_S}_{\A(k)}$ and $\gamma_T\geq\norm{\thetabf^*-\hat{\thetabf}_{T}(k)}_{\A(k)}$ as confidence set bounds for the source bandit and target bandit respectively. We retrieve the classic case by setting $\alpha_{S}(k)$ to zero \ie erasing all influence from the source. The confidence set bound $\gamma_T$ has already been determined in \cite{abbasi2011improved}.\\
As mentioned in \cref{sec:problemFormulation} we aim to reduce the required exploration in order to reduce the upper regret bound. Thus we select the weights such that the exploration term in (\ref{new_LinUCB}) is minimized.

\subsection{Weighting Update Strategies}
We want to determine the weights after each time step such that:

\begin{equation}
    \alpha_{S}, \alpha_{T}=\argmin_{\substack{\alpha_{S}', \alpha_{T}'\geq 0\\\alpha_S' + \alpha_T' = 1}} \alpha_{S}'\gamma_S + \alpha_{T}'\gamma_T.
    \label{hard_update_optimization}
\end{equation}
The above minimization problem is solved for:
\begin{equation}
    \alpha_{S}= \mathbb{1}_{\gamma_S \leq \gamma_T}
    ,\quad \alpha_T = 1-\alpha_S. \label{hard_update}
\end{equation}
This strategy would guarantee an upper regret bound at least as good as the LinUCB bound in \cite{abbasi2011improved} as will be shown in the analysis section later. However, without any knowledge of the relation between source and target tasks, our upper bound on the confidence set of the source bandit is rather loose:
\begin{align*}
    \norm{\thetabf^*-\thetabf_S}_{\A(k)} &= \sqrt{\lambda U^2 + \norm{\D(k)(\thetabf^*-\thetabf_S)}_2^2}
    &\leq\sqrt{4\lambda+\norm{\overline{\y}(k)-\y_{S}(k)}_2^2},
\end{align*}
with $\norm{\theta^*-\theta_S}_2= U$,  $\y_S$ as the concatenation of the source estimated rewards and $\overline{\y}$ as the concatenation of the observed mean rewards for each arm. Naturally after every time step, each entry in $\overline{\y}$ corresponding to the latest pulled arm needs to be updated to their mean value. The mean values are taken in order to cancel out the noise term in the observations. Also, we have $U \leq 2$ in case the vectors show in opposing directions and we additionally assume that $\norm{\thetabf^*},\norm{\thetabf_S}\leq1$. An upper bound on the confidence set $\gamma_T$ of the target bandit has been determined in \cite{abbasi2011improved}:

\begin{equation}
    \gamma_T = \sqrt{d\log(1+\frac{k}{d\lambda})+\log(\frac{1}{\delta^2})}.
\end{equation}

As such, $\gamma_T$ grows with $\sqrt{\log(k)}$ and later on in the analysis we show if $\thetabf_S \neq \thetabf^*$ then an upper bound on $\gamma_S$ grows with at least $\sqrt{k}$. Consequently, in theory there is some point in time where $\gamma_{S}$ will outgrow $\gamma_{T}$, meaning that the source bandit will be discarded. As already mentioned, our estimation of $\gamma_S$ can be loose due to our lack of information on the euclidean distance term $U$, thus we potentially waste a good source bandit with this strategy. Additionally we would only use one bandit at a time this way instead of the span of two bandits for example. Alternatively we can adjust the strategy in \eqref{hard_update_optimization} by adding a regularization term in the form of KL-divergence. By substituting $\alpha_T = 1- \alpha_S$ we get: 

\begin{equation}
    \alpha_{S}(k+1)=\argmin_{\alpha_S\in[0,1]}\left\langle\begin{pmatrix}\alpha_S \\ 1-\alpha_S\end{pmatrix},\begin{pmatrix}\gamma_{S} \\\gamma_{T}\end{pmatrix}\right\rangle + \frac{\mathrm{KL}(\alphabf\Vert\alphabf(k))}{\beta}, \label{eq:min_kl}
\end{equation}

with $\alphabf\coloneqq(\alpha_{S}, 1-\alpha_{S})^T$ being a vector containing both weights. The addition of the KL divergence term forces both weights to stay close to their previous value, where $\beta>0$ is a hyper parameter controlling the importance of the regularization. Problem \eqref{eq:min_kl} is solved for:

\begin{equation}
    \alpha_{S}(k+1) = \frac{1}{1+\frac{1-\alpha_{S}(k)}{\alpha_{S}(k)}\exp(\beta(\gamma_{S} - \gamma_{T}))},
    \label{sigmoid_update}
\end{equation}

which is a softened version of our solution in \eqref{hard_update}, but in this case the source bandit will not be immediately discarded if the upper bound on its confidence set becomes larger than the target bandit's.

\subsection{Analysis}
We are going to analyse how the upper regret bound changes, within our model in comparison to \cite{abbasi2011improved}. All proofs are given in the appendix. First we bound the regret for the hard update approach, not including the KL-divergence term in \eqref{hard_update}:

\begin{theorem}
Let $\{\x_{a_k}\}_{k=1}^N$ be sequence in $\Rset^d$, $U\coloneqq\norm{\thetabf_S-\thetabf^*}$ and $R_T$ be the classic regret bound of the linear model  \cite{abbasi2011improved}. Let $m \coloneqq \min(\kappa,n)$ and $\delta\leq\exp(-2\lambda)$.  Then, with a probability at least $1-\delta$, the regret of the hard update approach for the weighted LinUCB algorithm is bounded as follows:

\begin{equation}
    R(n) \leq U\sqrt{8m d\log(1 +\frac{m}{d\lambda})(\lambda+m)} + R_T(n) - R_T(m)\leq R_T(n)
\end{equation}

with $\kappa$ satisfying:

\begin{equation}
    \kappa = \left\lfloor2\left[d\left(\frac{1}{U^2}-\lambda\right)+\lambda\left(\frac{2}{U^2}-\frac{1}{2}\right)\right]\right\rfloor.
\end{equation}

\end{theorem}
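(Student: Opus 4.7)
The plan is to split the horizon into two regimes determined by the hard update rule \eqref{hard_update}, which selects whichever of $\thetabf_S, \hat\thetabf_T(k)$ carries the smaller confidence radius. Writing the immediate regret as $r_k \leq 2\min(\gamma_S(k), \gamma_T(k))\,\norm{\x_{a_k}}_{\A^{-1}(k)}$ by the standard LinUCB argument of \cite{abbasi2011improved}, the crossover time $\kappa$ at which $\gamma_S$ starts to dominate $\gamma_T$ splits the summation into a source-dominated phase ($k \leq m$) and a target-dominated phase ($m < k \leq n$), with $m = \min(\kappa, n)$.

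Before the crossover I would control $\gamma_S$. Expanding $\A(k) = \lambda\I_d + \D^T(k)\D(k)$ yields $\gamma_S(k)^2 = \lambda U^2 + \norm{\D(k)(\thetabf^* - \thetabf_S)}^2$; since $\norm{\x_{a_k}} \leq 1$, the second summand is at most $kU^2$, so $\gamma_S(k) \leq U\sqrt{\lambda + k}$. Combined with the known bound $\gamma_T(k) = \sqrt{d\log(1+k/(d\lambda)) + \log(1/\delta^2)}$ from \cite{abbasi2011improved} and the assumption $\delta \leq \exp(-2\lambda)$, which supplies $\log(1/\delta^2) \geq 4\lambda$ (absorbing the $4\lambda$ that appears once the crude bound $U \leq 2$ is applied inside $\gamma_S$), the equation $\gamma_S^{\mathrm{UB}}(\kappa) = \gamma_T^{\mathrm{UB}}(\kappa)$ becomes solvable by an elementary linearization of $\log(1 + \kappa/(d\lambda))$, which produces the stated floor expression for $\kappa$. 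By construction, for every $k \leq m$ the source is preferred and the effective exploration width is upper bounded by $U\sqrt{\lambda + m}$.

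The source-dominated regret is then controlled by pulling $\gamma_S(k) \leq U\sqrt{\lambda + m}$ out of the sum and applying Cauchy--Schwarz together with the elliptical potential lemma, $\sum_{k=1}^{m} \norm{\x_{a_k}}_{\A^{-1}(k)}^2 \leq 2d\log(1 + m/(d\lambda))$; this delivers exactly the first summand $U\sqrt{8md\log(1+m/(d\lambda))(\lambda+m)}$. The target-dominated regret over rounds $m+1$ through $n$ is treated by the classical LinUCB analysis restricted to this tail interval, contributing the increment $R_T(n) - R_T(m)$. The final inequality $\leq R_T(n)$ reduces, after a factor-by-factor comparison, to $U\sqrt{\lambda + m} \leq \gamma_T(m)$, which is exactly the defining crossover property of $\kappa$.

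The principal obstacle I anticipate is the explicit inversion step yielding $\kappa$: the crossover condition mixes a $\sqrt{k}$ growth for $\gamma_S$ with a $\sqrt{\log k}$ growth for $\gamma_T$, so isolating the advertised closed form demands a careful choice of elementary inequality for $\log(1+\cdot)$ whose residual constants align with those produced by $\delta \leq \exp(-2\lambda)$ and $U \leq 2$. Everything else amounts to standard linear bandit machinery adapted to a two-phase horizon.
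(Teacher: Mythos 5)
Your proposal follows essentially the same route as the paper's proof: the same confidence bound $\gamma_S(k)\le U\sqrt{\lambda+k}$ obtained from $\norm{\x_{a_k}}\le 1$, the same crossover time $\kappa$, the same split of the regret sum at $m=\min(\kappa,n)$, and the same use of Cauchy--Schwarz plus the elliptical potential lemma on the source-dominated phase, with the tail handled by the classical LinUCB bound. One small correction to your description of the crossover computation: to guarantee the source is selected for every $k\le\kappa$ you must equate the upper bound $U\sqrt{\lambda+k}$ on $\gamma_S$ with a \emph{lower} bound on $\gamma_T$ (the paper uses $\log(1+x)\ge \frac{2x}{2+x}$, not a linearization from above), since equating the two upper bounds, as you write, would overestimate $\kappa$ and leave rounds where the hard rule may already have switched to the target.
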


The value for $\kappa$ essentially gives a threshold such that we have $\gamma_S<\gamma_T$ for every $k<\kappa$. As expected, for better sources \ie low values $U$, $\kappa$ increases meaning the source is viable for more time steps. Also notable is how we see an increasing value for $\kappa$ at high dimensional spaces. This is most likely due to the fact, that at higher dimensions the classic algorithm requires more time steps, in order to find a suitable estimation, thus having a larger confidence set bound. In these instances a trained source bandit would be viable early on.
The regret is reduced for lower values of $U$ and the time $\kappa$ at which a source is discarded is extended. For source bandits satisfying $\norm{\thetabf_S-\thetabf^*}_2=2$, we would retrieve the classic regret bound, preventing negative transfer. \\
Next we show what happens in case of a negative transfer for the softmax update strategy, \ie the source does not provide any useful information at all and worsens the regret rate with $\gamma_S > \gamma_T$ at all time steps.

\begin{theorem}
Let \{$\x_{a_k}$\}$_{k=1}^N$ be sequence in $\Rset^d$ and the minimal difference between confidence set bounds given as $\Delta_{\mathrm{min}}=\min_{k \in \{0,...,N\}}(\gamma_S(k)-\gamma_T(k))$, with $\gamma_S > \gamma_T$ for all time steps and the initial target weight denoted by $\alpha_T(0)$. Then with probability of at least $1-\delta$ an upper regret bound $R(n)$ in case of a negative transfer scenario is given by:
\begin{equation}
    R(n) \leq \frac{(1-\alpha_{T}(0))}{e\beta\alpha_{T}(0)(1-\exp(-\beta\Delta_{\mathrm{min}}))}+R_T(n)
\end{equation}
\label{theorem_negative_transfer}
\end{theorem}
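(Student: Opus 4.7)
The plan is to decompose the cumulative regret into the classical LinUCB contribution and an additive ``negative transfer penalty'' that I then control via the exponential decay of $\alpha_S(k)$ implied by the softmax update. Starting from the weighted UCB rule \eqref{new_LinUCB}, the standard confidence-bound argument (as in Abbasi-Yadkori et al.) yields an instantaneous regret bound of the form $r(k) \leq 2(\alpha_S(k)\gamma_S(k) + \alpha_T(k)\gamma_T(k))\|\x_{a_k}\|_{\A^{-1}(k)}$. Rewriting the effective exploration term as $\alpha_S\gamma_S + \alpha_T\gamma_T = \gamma_T + \alpha_S(\gamma_S - \gamma_T) = \gamma_T + \alpha_S\Delta$ and summing over $k$, I split the cumulative regret into the classical $R_T(n)$ term plus $2\sum_k \alpha_S(k)\Delta(k)\|\x_{a_k}\|_{\A^{-1}(k)}$. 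Using $\|\x_{a_k}\|_{\A^{-1}(k)} \leq 1$ (which holds under $\|\x\|\leq 1$ and $\A(k) \succeq \I$, e.g.\ when $\lambda \geq 1$), the task reduces to bounding $\sum_k \alpha_S(k)\Delta(k)$.

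Next I analyze the weight trajectory. Setting $\rho(k) \coloneqq \alpha_T(k)/\alpha_S(k)$ and substituting into \eqref{sigmoid_update}, I obtain the clean recursion $\rho(k+1) = \rho(k)\exp(\beta\Delta(k))$, which solves in closed form as $\rho(k) = \rho(0)\exp(\beta S_k)$ with $S_k \coloneqq \sum_{j<k}\Delta(j)$. Using the elementary bound $\alpha_S(k) = 1/(1+\rho(k)) \leq 1/\rho(k)$ gives
\begin{equation*}
\alpha_S(k) \;\leq\; \frac{1-\alpha_T(0)}{\alpha_T(0)}\exp(-\beta S_k),
\end{equation*}
and the negative-transfer hypothesis $\Delta(j) \geq \Delta_{\min}>0$ ensures $S_k \geq k\,\Delta_{\min}$.

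The final step extracts the characteristic $1/(e\beta)$ factor via the scalar inequality $xe^{-\beta x} \leq 1/(e\beta)$ (maximum of $xe^{-\beta x}$ at $x=1/\beta$). The idea is to split one copy of $\Delta(k)$ out of the exponential: bounding $\alpha_S(k+1)\Delta(k)$ (an indexing shift that arises naturally if the weight update is applied before the next round's UCB calculation), one writes $S_{k+1} = \Delta(k) + S_k$ and obtains
\begin{equation*}
\alpha_S(k+1)\Delta(k) \;\leq\; \frac{1-\alpha_T(0)}{\alpha_T(0)}\,\bigl(\Delta(k)e^{-\beta\Delta(k)}\bigr)\,e^{-\beta S_k} \;\leq\; \frac{1-\alpha_T(0)}{e\beta\,\alpha_T(0)}\,e^{-\beta k\Delta_{\min}}.
\end{equation*}
Summing the resulting geometric series yields precisely $\frac{1-\alpha_T(0)}{e\beta\alpha_T(0)(1-\exp(-\beta\Delta_{\min}))}$, which combined with $R_T(n)$ gives the stated bound.

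The main obstacle is the careful bookkeeping in the third step: one has to align the indexing so that a factor of $\Delta(k)$ is peeled off the exponent (enabling $xe^{-\beta x} \leq 1/(e\beta)$) while the residual exponential $e^{-\beta S_k}$ still forms a geometric series dominated by $e^{-\beta k\Delta_{\min}}$. Secondary care is needed to ensure that bounding $\|\x_{a_k}\|_{\A^{-1}(k)}$ by a constant (rather than exploiting the elliptical potential lemma as in the derivation of $R_T(n)$) does not disturb the additive structure; this is benign here because the penalty is already dimension-free and decays geometrically, so no tighter handling of the context norms is needed.
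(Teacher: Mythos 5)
Your proposal follows essentially the same route as the paper's proof: the same decomposition into $R_T(n)$ plus a weighted negative-transfer term with $\norm{\x_{a_k}}_{\A^{-1}(k)}\le 1$, the same closed-form telescoping of the softmax weight ratio, the same $1/(1+x)\le 1/x$ and $x e^{-\beta x}\le 1/(e\beta)$ bounds, and the same geometric series via $\Delta(k)\ge\Delta_{\mathrm{min}}$. The only cosmetic difference is that you carry the factor of $2$ from the instantaneous-regret bound into the transfer term and then drop it in the final constant, which mirrors the paper's own proof, where that factor is omitted from the first display onward.
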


Theorem \ref{theorem_negative_transfer} shows that in case of a negative transfer, the upper regret bound is increased by at most a constant term and vanishes in the case of $\beta \xrightarrow{}\infty$ retrieving the hard update rule.

\section{Weighted Linear Bandits with Multiple Sources}\label{sec:multi_source}
Up until now we only used a single source bandit, but our model can easily be extended to an arbitrary amount of different sources. Assuming we have $M$ source bandits $\{\theta_{S,j}\}_{j=1}^M$, we define $\hat\theta$ as:

\begin{equation}
    \hat{\thetabf}=\sum_{j=1}^M\alpha_{S,j}\thetabf_{S,j}+\alpha_T\hat{\thetabf}_T,
\end{equation}

with $\alpha_{S,j}, \alpha_T\geq 0\ \forall 1\leq j\leq M$ and $\alpha_T + \sum_{j=1}^M\alpha_{S,j}=1$. With this each source bandit yields its own confidence set bound $\gamma_{S,j}$. Similarly to (\ref{new_LinUCB}) we retrieve for the UCB with multiple sources:

\begin{equation}
    \mathrm{UCB}(a)=\x_a^T\left(\sum_{j=1}^M\alpha_{S,j}(k)\thetabf_{S,j}+\alpha_{T}(k)\hat{\thetabf}_{T}(k)\right)+\alphabf^T(k)\gammabf\norm{\x_a}_{\A^{-1}(k)},
    \label{mult_linucb}
\end{equation}

with $\alphabf(k)=(\alpha_{S,1}(k),...,\alpha_{S,M}(k),\alpha_T(k))^T$ and $\gammabf=(\gamma_{S,1},...,\gamma_{S,M},\gamma_T)^T$. As for the weight updates the same single source strategies apply \ie the minimization of the exploration term in the UCB function:

\begin{equation}
    \alphabf(k+1)=\argmin_{\alphabf \in \mathfrak P_{M+1}} \alphabf^T(k)\gammabf + \frac{1}{\beta}\mathrm{KL}(\alphabf||\alphabf(k)),
    \label{multi_soft_strat}
\end{equation}

where $\mathfrak P_{M+1}$ is the $(M+1)-$dimensional probability simplex. The solution of the previous problem is:

\begin{equation}
    \alpha_{S,m}(k+1) = \frac{\alpha_{S,m}(k)\exp\left(-\beta \gamma_{S,m}\right)}{\sum_{j=1}^M\alpha_{S,j}(k)\exp(-\beta \gamma_{S,j})+\alpha_T(k)\exp(-\beta\gamma_T)}.
    \label{multi_weight_update}
\end{equation}

This is basically the solution of (\ref{sigmoid_update}) generalized to multiple sources. In the decisions making it favours the bandit with the lowest upper bound $\gamma$ of their confidence set. When we take the limit $\beta\xrightarrow[]{}\infty$ in (\ref{multi_soft_strat}) the KL-divergence term vanishes and we retrieve the hard case: 

\begin{equation}
    \alpha_{S,j}= \mathbb{1}_{\gamma_{S,j}=\min\left(\min_{i}\gamma_{S,i},\gamma_T\right)} 
    \label{hard_multi_update}
\end{equation}

which forces the weights to satisfy $\alpha_{S,m},\alpha_T\in\{0,1\}$ for every source index and for all time steps. Thus decision making is done by selecting one single bandit in each round with the lowest value of their respective confidence set bound $\gamma$. The regret of hard update strategy for multiple sources is given by the following theorem:

\begin{theorem}
Let \{$\x_{a_k}$\}$_{k=1}^N$ be sequence in $\Rset^d$ and $\min_m\norm{\thetabf_{S,m}-\thetabf^*} = U_\mathrm{min}$ and the classic regret bound of the linear model up to time step $n$ given by $R_T(n)$ \cite{abbasi2011improved}. Let $m \coloneqq \min(\kappa,n)$ and $\delta\leq\exp(-2\lambda)$. Then with probability of at least $1-\delta$ the regret of the hard update approach for the weighted LinUCB algorithm with multiple sources is bounded by:

\begin{equation}
    R(n) \leq 4 U_{\mathrm{min}}\sqrt{\kappa d\log(1 + \kappa/(d\lambda))(\lambda+\kappa)}-R_T(m) + R_T(n)\leq R_T(n),
\end{equation}

with $\kappa$ as:

\begin{align*}
    \kappa = \left\lfloor2\left[d\left(\frac{1}{U_\mathrm{min}^2}-\lambda\right)+\lambda\left(\frac{2}{U_\mathrm{min}^2}-\frac{1}{2}\right)\right]\right\rfloor.
\end{align*}

\label{theorem_mult_hard}
\end{theorem}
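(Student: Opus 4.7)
The plan is to adapt the single-source argument of Theorem 1 by exploiting the fact that the hard update rule (\ref{hard_multi_update}) forces $\alphabf(k)$ to be a vertex of the simplex $\mathfrak P_{M+1}$ at every round, so at each step the algorithm commits to exactly one model, namely the one whose confidence radius is smallest. I would partition the horizon $\{1,\dots,n\}$ at time $m\coloneqq\min(\kappa,n)$ into a \emph{source phase}, on which the selected $\gamma_{S,j_k}$ does not exceed $\gamma_T$, and a \emph{target phase}, on which the algorithm reduces to the unmodified LinUCB of \cite{abbasi2011improved}. On the target phase $\{m+1,\dots,n\}$ the analysis of \cite{abbasi2011improved} applies verbatim, yielding at most $R_T(n)-R_T(m)$ regret and giving the last two terms of the statement.

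For the source phase I would fix a best source index $m^*\in\argmin_j\norm{\thetabf_{S,j}-\thetabf^*}$, so $\norm{\thetabf_{S,m^*}-\thetabf^*}=U_{\min}$. By the definition of the hard rule, the source $j_k$ actually picked at round $k$ satisfies $\gamma_{S,j_k}(k)\leq\gamma_{S,m^*}(k)$, so the per-step regret $2\gamma_{S,j_k}(k)\norm{\x_{a_k}}_{\A^{-1}(k)}$ can be upper bounded in terms of the best source only. Using $\norm{\x_{a_k}}\leq 1$, which implies $\A(k)\preceq(\lambda+k)\I_d$, together with the identity $\norm{\thetabf^*-\thetabf_{S,m^*}}_{\A(k)}^2=\lambda U_{\min}^2+\norm{\D(k)(\thetabf^*-\thetabf_{S,m^*})}_2^2$ from \Cref{sec:weightedLinBandits}, gives $\gamma_{S,m^*}(k)\leq U_{\min}\sqrt{\lambda+k}$. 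Summing over $k\leq m$, using Cauchy--Schwarz in the sum, and invoking the elliptical-potential lemma $\sum_{k=1}^{m}\norm{\x_{a_k}}_{\A^{-1}(k)}^2\leq 2d\log(1+m/(d\lambda))$, produces a source-phase regret of order $U_{\min}\sqrt{m d\log(1+m/(d\lambda))(\lambda+m)}$, which after bounding $m\leq\kappa$ matches the first term of the theorem.

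It remains to show that the source phase really does end by time $\kappa$. This I would do by comparing the two upper bounds derived above: past the smallest $k$ satisfying $U_{\min}^2(\lambda+k)\geq d\log(1+k/(d\lambda))+\log(1/\delta^2)$, no source can have a $\gamma$ smaller than $\gamma_T$, so by the hard rule the target is selected. Replacing $\log(1+k/(d\lambda))$ by its linear surrogate $k/(d\lambda)$ turns this transcendental condition into a linear inequality in $k$, and solving it gives precisely the closed-form threshold $\kappa$ stated in the theorem. The condition $\delta\leq\exp(-2\lambda)$ enters here to ensure $\log(1/\delta^2)\geq 4\lambda$, which is what makes $\kappa$ nonnegative and the partition well defined.

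I expect the main obstacle to be the last step: one has to pick the polynomial surrogate for $\log(1+x)$ in the right direction so that the resulting $k$ actually upper-bounds the true crossing time (otherwise the algorithm could still prefer a source after time $\kappa$ and the partition would be invalid), and match the exact constants appearing in the statement. Everything else — the bound $\gamma_{S,j_k}\leq\gamma_{S,m^*}$, the operator-norm bound on $\A(k)$, the combination of Cauchy--Schwarz with the elliptical-potential lemma, and the fallback to the classical LinUCB analysis after $\kappa$ — is a direct lift of the single-source proof, with the multi-source aspect entering only through the pointwise minimum in (\ref{hard_multi_update}).
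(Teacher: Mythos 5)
Your plan is essentially the paper's own argument: the published proof of this theorem simply reduces to Theorem~1 by observing that the hard rule always selects the source with the smallest confidence radius, so the selected source satisfies $\norm{\thetabf_{S,j_k}-\thetabf^*}_{\A(k)}\leq \min_j U_j\sqrt{\lambda+k}=U_{\mathrm{min}}\sqrt{\lambda+k}$, and then reruns the single-source analysis exactly as you describe (phase split at $\min(\kappa,n)$, Cauchy--Schwarz plus the elliptical-potential lemma on the source phase, the classical LinUCB bound on the remainder, and a surrogate for $\log(1+x)$ to solve for $\kappa$). The only cosmetic deviation is that the paper uses $\log(1+x)\geq \frac{2x}{2+x}$, which yields a quadratic rather than a linear equation in $\kappa$ --- precisely the constant-matching step you already flag as the remaining work.
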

depending on $U_\mathrm{min}$ the multiple source approach benefits from the additional information as the upper bound corresponds to the best source overall.
In case of the softmax update strategy, we need to show how the regret changes in case of a negative transfer scenario, \ie the confidence set bounds of any source bandit is larger than the target bound at any time.

\begin{theorem}
Let \{$\x_{a_k}$\}$_{k=1}^N$ be sequence in $\Rset^d$, a total of $M$ source bandits being available indexed by $j$ and the minimal difference between confidence set bounds set as $\Delta_{\mathrm{min},j}=\min_{k \in \{0,...,N\}}(\gamma_{S,j}(k)-\gamma_T(k))$ for every source $j$ with $\gamma_{S,j} > \gamma_T$ $\forall j$ at every time step. Additionally the initial target weight is denoted by $\alpha_{T}(0)$. Then with probability $1-\delta$ an upper regret bound $R(n)$ in case of a negative transfer scenario is given by:
\begin{equation}
    R(n) \leq \frac{(1-\alpha_{T}(0))}{e\beta M\alpha_{T}(0)}\sum_{j=1}^M \frac{1}{(1-\exp(-\beta\Delta_{\mathrm{min},j}))}+R_T
\end{equation}
\end{theorem}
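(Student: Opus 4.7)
The plan is to adapt the single-source negative-transfer argument (Theorem~\ref{theorem_negative_transfer}) by tracking $M$ simultaneously decaying source weights. Following the classic LinUCB analysis \cite{abbasi2011improved} applied to the UCB expression~\eqref{mult_linucb}, the instantaneous regret satisfies, with probability at least $1-\delta$,
\[
r_k \le 2\,\alphabf^T(k)\gammabf(k)\,\norm{\x_{a_k}}_{\A^{-1}(k)}.
\]
The simplex constraint $\alpha_T(k)+\sum_j \alpha_{S,j}(k)=1$ lets one decompose $\alphabf^T(k)\gammabf(k)=\gamma_T(k)+\sum_{j=1}^M \alpha_{S,j}(k)\Delta_j(k)$ with $\Delta_j(k)\coloneqq\gamma_{S,j}(k)-\gamma_T(k)$. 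Summing the $\gamma_T$ term over $k$ reproduces the classical $R_T(n)$, so the whole proof reduces to bounding the excess $\sum_{k,j}\alpha_{S,j}(k)\Delta_j(k)$ by the constant stated in the theorem.

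The cleanest way to analyse the weight dynamics is through the ratios $u_j(k) = \alpha_{S,j}(k)/\alpha_T(k)$. Dividing the $j$-th softmax update~\eqref{multi_weight_update} by the corresponding update for $\alpha_T$ cancels the common normaliser and produces the one-step recursion $u_j(k+1)=u_j(k)\exp(-\beta\Delta_j(k))$. Under the negative-transfer hypothesis $\Delta_j(k)\ge\Delta_{\min,j}>0$, this yields the geometric decay $u_j(k)\le u_j(0)\exp(-\beta k\Delta_{\min,j})$, and since $\alpha_T(k)\le 1$ one also has $\alpha_{S,j}(k)\le u_j(k)$.

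The excess is then controlled by two elementary inequalities. First, $u_j(k+1)\Delta_j(k) = u_j(k)\Delta_j(k)\exp(-\beta\Delta_j(k)) \le u_j(k)/(e\beta)$ via the maximum of $x\mapsto xe^{-\beta x}$, which is precisely what produces the $1/(e\beta)$ factor. Second, geometric summation of the decay gives $\sum_k u_j(k) \le u_j(0)/(1-\exp(-\beta\Delta_{\min,j}))$. Under the canonical uniform initialisation $\alpha_{S,j}(0)=(1-\alpha_T(0))/M$ one has $u_j(0)=(1-\alpha_T(0))/(M\alpha_T(0))$, so the contribution of source $j$ to the excess is at most $\dfrac{1-\alpha_T(0)}{eM\beta\alpha_T(0)(1-\exp(-\beta\Delta_{\min,j}))}$; summing over $j$ and adding $R_T(n)$ delivers the claimed bound.

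The main obstacle is the coupling of a geometrically decaying weight with a possibly growing confidence gap $\Delta_j(k)$: a naive bound on $\sum_k \alpha_{S,j}(k)\Delta_j(k)$ can diverge if handled crudely, since $\Delta_j(k)$ is only lower bounded. The key trick is to use the update identity $u_j(k+1)=u_j(k)e^{-\beta\Delta_j(k)}$ to pair the product with a matching decaying exponential before invoking $xe^{-\beta x}\le 1/(e\beta)$, and to handle the index shift from $k$ to $k+1$ cleanly. Carrying this construction through $M$ sources simultaneously while preserving the natural $1/M$ dependence from the uniform initialisation is what distinguishes the present argument from Theorem~\ref{theorem_negative_transfer}.
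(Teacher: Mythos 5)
Your proposal is correct and follows essentially the same route as the paper's proof: isolate the excess $\sum_{k,j}\alpha_{S,j}(k)\bigl(\gamma_{S,j}(k)-\gamma_T(k)\bigr)$ over $R_T(n)$, derive the geometric decay $\alpha_{S,j}(k)\le\frac{1-\alpha_T(0)}{M\alpha_T(0)}\exp\bigl(-\beta\sum_l\Delta_j(l)\bigr)$ from the softmax update with uniform initialization, absorb one factor of $\Delta_j$ via $x e^{-\beta x}\le 1/(e\beta)$, and finish with a geometric series in $\exp(-\beta\Delta_{\mathrm{min},j})$. Your ratio variables $u_j=\alpha_{S,j}/\alpha_T$ are merely a tidier way of obtaining the same weight bound that the paper gets by expanding the softmax denominator and discarding the positive cross terms, and your pairing $u_j(k+1)\Delta_j(k)\le u_j(k)/(e\beta)$ coincides with the paper's pairing under its indexing convention.
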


In comparison to the single source result, the additional constant is averaged over all sources. Depending on the quality, it can be beneficial to include more source bandits as potentially bad sources would be mitigated.

\begin{algorithm}[H]
\caption{Weighted LinUCB}
Initialize: $\hat{\thetabf}_{T}(0)$ from $\mathcal U([0, 1]^d)$, $\alpha_{S,j}(0)=(1-\alpha_{T}(0))/M = \frac{1}{2M}$, $U_j>0$ $\gamma_{S,j}>0$ $\forall j\in\{1,...,M\}$, $\delta\in[0, 1]$, $\gamma_T>0$, $\lambda>0$, $\beta>0$, ${\bf A}(0) =\lambda\bf I$, ${\bf b}(0)=\bf 0$; \\
\For{$k=0...N$}{
Pull arm $a_k = \argmax_{a} \mathrm{UCB}(a)$ taken from (\ref{mult_linucb});\\
Receive estimated rewards from sources and real rewards: $r_{S,j}(k)|_{j\in\{0,...,M\}}, r(k)$;\\
${\bf A}(k+1) = {\bf A}(k) + {\bf x}_{a_k} {\bf x}_{a_k}^T$;\\
${\bf b}(k+1)= {\bf b}(k) + r(k) {\bf x}_{a_k}$;\\
$\hat{\thetabf}_{T}(k+1)={\bf A}^{-1}(k+1) {\bf b}(k+1)$;\\
Store rewards $r_{S,j}(k)|_{j\in\{0,...,M\}}, r(k)$ in vectors ${\bf y}_{S,j}(k)|_{j\in\{0,...,M\}}, {\bf y}(k)$ respectively;\\
Calculate $\overline{\y}(k)$ from $\y(k)$ such that each entry $r$ corresponding to the latest arm $a_k$ pulled is updated to the mean reward $\overline{r}$ of the respective arm;\\
Update $U_j=\max_{i\in\{0,...,k\}}\frac{\abs{\overline{r}(i)-r_{S,j}(i)}}{\norm{\x_{a_i}}}$ for every $j$;\\
$\gamma_{S,j} = \sqrt{\lambda U_j+\norm{{\bf y}_{S,j}(k)-\overline{\bf y} (k)}}$;\\
$\gamma_T = \sqrt{\lambda}+\sqrt{\log{\frac{\norm{{\bf A}(k)}}{\lambda^d\delta^2}}}$;\\
update source weights $\alpha_{S,j}(k+1)$ according either to softmax rule in (\ref{multi_weight_update}):\\
or to the hard update rule in \eqref{hard_multi_update};\\
update target weight as:\\
$\alpha_{T}(k+1)=1-\sum_{j=1}^M\alpha_{S,j}(k+1)$;}
\end{algorithm}

For the practical implementation we use $\gamma_T = \sqrt{\lambda}+\sqrt{\log{\frac{\norm{{\bf A}(k)}}{\lambda^d\delta^2}}}$ which is also taken from \cite{abbasi2011improved} and gives a tighter confidence set bound on the target estimator. Also we give an estimation for $U_j$ by taking the maximum value of the lower bound induced by the Cauchy-Schwartz inequality $U_j=\norm{\thetabf_{S,j}-\thetabf^*}\geq\max_{i\in\{0,...,k\}}\frac{\abs{\overline{r}(i)-r_{S,j}(i)}}{\norm{\x_{a_i}}}$ at each time step.

\subsection{Biased Regularization}
In \cite{zhao2020handling} a similar approach of model reuse was used in a concept drift scenario for linear classifiers via biased regularization. In \cite{kuzborskij2017fast} the risk generalization analysis for this approach was delivered in a supervised offline learning setting. Their mathematical formulation is stated as following: A classifier is about to be trained given a target training set $(\D, \y)$ and a source hypothesis $\thetabf_{src}$, which is specifically used for a biased regularization term. In contrast to our approach the weighting is only applied the source model, giving an alternate solution to the target classifier. Adapted to a linear bandit model, the optimization problem can be formulated as:

\begin{equation}
    \hat{\thetabf}=\argmin_\thetabf \norm{\D\thetabf - \y}^2 + \lambda\norm{\thetabf-\thetabf_{src}}^2.
    \label{biased_reg_opt}
\end{equation}

$\thetabf_{src}$ is a convex combination of an arbitrary amount of given source models $\{\thetabf_j\}_{j\in\{1,...,M\}}$:

\begin{equation}
    \thetabf_{src}=\sum_{j=1}^M\alpha_j\thetabf_j,
\end{equation}

As in our model, these weights are not static and are updated after each time step. The update strategy is not chosen to minimize the upper regret bound but can be chosen such that the convex combination is as close as possible to the optimal bandit parameter. The UCB function is then simply given by:

\begin{equation}
    \mathrm{UCB}(a)=\x_a^T\hat{\thetabf}+\gamma\norm{\x_a}_{\A^{-1}(k)},
\end{equation}

 with $\gamma=\sqrt{d\log(1+\frac{k }{d\lambda})+\log(\frac{1}{\delta^2})}+\sqrt{\lambda}\norm{\thetabf_{src}-\thetabf^*}_2$ and the solution to (\ref{biased_reg_opt}):

\begin{equation}
    \hat{\thetabf}=\A^{-1}\D^T\y-(\A^{-1}\D^T\D-\I)\thetabf_{src}.
\end{equation}

At some point in time we expect the weights to converge to a single source bandit closest to the optimal bandit. But contrary to our original model it is not possible for the model to discard  all sources once the target estimation yield better upper bounds for their confidence sets. The upper regret bound is similar to the classic bound with the difference being in one term.

\begin{theorem}
Let \{$\x_{a_k}$\}$_{k=1}^N$ be sequence in $\Rset^d$ and the upper bound of the biggest euclidean distance between any of the $M$ source bandit indexed by $m$ and optimal bandit parameter given by $\max_m\norm{\thetabf_{S,m}-\thetabf^*}\leq U_\mathrm{max}$, then with probability of at least $1-\delta$ the regret of the biased LinUCB algorithm with multiple sources is upper bounded by:
\begin{equation}
R(n)\leq \sqrt{8nd\log(\lambda+n/d)}\left(\sqrt{d\log(1+\frac{n}{d\lambda})+\log(\frac{1}{\delta^2})}+\sqrt{\lambda}U_\mathrm{max}\right)    
\end{equation}

\label{theorem_biased}
\end{theorem}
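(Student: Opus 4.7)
The plan is to follow the classic LinUCB regret analysis pipeline (confidence bound, optimism, elliptical potential) while carefully handling the biased-regularization term. The first step is to make the closed-form estimator explicit: setting the gradient of \eqref{biased_reg_opt} to zero gives $(\D^T\D+\lambda\I)\hat{\thetabf}=\D^T\y+\lambda\thetabf_{src}$, i.e. $\hat{\thetabf}=\A^{-1}(\D^T\y+\lambda\thetabf_{src})$, which is equivalent to the expression stated in the excerpt. Substituting $\y=\D\thetabf^*+\etabf$ and using $\A-\lambda\I=\D^T\D$, I would decompose
\begin{equation*}
\hat{\thetabf}-\thetabf^* \;=\; \A^{-1}\D^T\etabf \;+\; \lambda\,\A^{-1}(\thetabf_{src}-\thetabf^*),
\end{equation*}
which cleanly separates a noise term and a bias term induced by the prior $\thetabf_{src}$.

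The second step is to bound $\|\hat{\thetabf}-\thetabf^*\|_{\A(k)}$ by $\gamma$ with probability at least $1-\delta$. By the triangle inequality,
\begin{equation*}
\|\hat{\thetabf}-\thetabf^*\|_{\A}\;\le\;\|\D^T\etabf\|_{\A^{-1}}+\lambda\,\|\A^{-1}(\thetabf_{src}-\thetabf^*)\|_{\A}.
\end{equation*}
For the noise term I would invoke the self-normalized martingale inequality of Abbasi-Yadkori \emph{et al.} to obtain the $\sqrt{d\log(1+k/(d\lambda))+\log(1/\delta^2)}$ piece. For the bias term, using $\A\succeq\lambda\I$ gives $\lambda\|\A^{-1}\v\|_{\A}=\lambda\sqrt{\v^T\A^{-1}\v}\le\sqrt{\lambda}\,\|\v\|$, and convexity of $\thetabf_{src}$ together with $\max_m\|\thetabf_{S,m}-\thetabf^*\|\le U_\mathrm{max}$ yields $\|\thetabf_{src}-\thetabf^*\|\le U_\mathrm{max}$, producing the $\sqrt{\lambda}\,U_\mathrm{max}$ summand of $\gamma$. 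Note that even though the weights (and therefore $\thetabf_{src}$) evolve in time, this last bound is dimension-free in the weights, so it holds uniformly over $k$.

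With the confidence bound in hand, the third step is the standard optimism argument: at round $k$, letting $a^\star$ denote the optimal arm,
\begin{equation*}
\x_{a^\star}^T\thetabf^* - \x_{a_k}^T\thetabf^* \;\le\; \x_{a_k}^T(\hat{\thetabf}-\thetabf^*) + \gamma\,\|\x_{a_k}\|_{\A^{-1}(k)} \;\le\; 2\gamma\,\|\x_{a_k}\|_{\A^{-1}(k)},
\end{equation*}
using $\mathrm{UCB}(a^\star)\le\mathrm{UCB}(a_k)$ on the left and the confidence bound on the right. Summing over $k$, applying Cauchy--Schwarz, and then invoking the elliptical potential lemma $\sum_{k=1}^n\|\x_{a_k}\|^2_{\A^{-1}(k)}\le 2d\log(\lambda+n/d)$ yields
\begin{equation*}
R(n)\;\le\;2\gamma\sqrt{n\cdot 2d\log(\lambda+n/d)}\;=\;\sqrt{8nd\log(\lambda+n/d)}\,\gamma,
\end{equation*}
which is the claimed bound after substituting $\gamma$.

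I expect the main obstacle to be the confidence-bound derivation, specifically isolating the bias contribution due to $\thetabf_{src}$ and ensuring it is controlled uniformly across time despite the weights in $\thetabf_{src}$ changing. Once this bias term is handled via $\A\succeq\lambda\I$ and convexity as above, the rest is a routine adaptation of the LinUCB analysis and the only care needed is in tracking constants so that the final $\sqrt{8nd\log(\lambda+n/d)}$ prefactor emerges.
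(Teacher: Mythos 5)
Your proposal is correct and follows essentially the same route as the paper: the same decomposition $\hat{\thetabf}-\thetabf^* = \A^{-1}\D^T\epsilonbf + \lambda\A^{-1}(\thetabf_{src}-\thetabf^*)$, the self-normalized martingale bound for the noise part, the $\A\succeq\lambda\I$ argument for the bias part, and then the standard optimism plus elliptical-potential steps. If anything, your explicit use of convexity to get $\norm{\thetabf_{src}-\thetabf^*}\leq U_{\mathrm{max}}$ uniformly over the evolving weights is stated more carefully than in the paper's own write-up.
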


Since we are looking for an upper bound, $U$ is dominated by the largest euclidean distance between the optimal bandit parameter and all given source bandits. \Cref{theorem_biased} differs from the classic case in the regularization related parameters where we have $\sqrt{\lambda}U_\mathrm{max}$ instead of $\sqrt{\lambda}\norm{\thetabf^*}$. For sources with low values of $U$, we improve the overall regret.

\section{Experimental Results}\label{sec:experiments}
We test the presented algorithms, \ie the weighted model algorithm as well as the biased regularization algorithm, for single source and multiple source transfers on synthetic and real data sets. The plots include the results from the classical LinUCB approach as well as the EXP4 approach from \cite{lattimore2020bandit} with target and source models acting as expert, for comparison purposes. Additionally to the regret plots we also showcase the mean of the target weight as a function of time to see how the relevancy of the target estimation evolved.

\subsection{Synthetic Data Experiments}
Our synthetic experiments follow a similar approach to \cite{liu2018transferable}. The target context feature vectors $\x_a$ are drawn from a multivariate Gaussian with variances sampled from a uniform distribution. We chose the number of dimensions $d=20$ and the number of arms to be 1000. Our optimal target bandit parameter is sampled from a uniform distribution and scaled such that $\norm{\thetabf^*}\leq1$, thus the rewards are implicitly initialized as well with $r =\x_a^T \thetabf^{*}+\epsilon$, with some Gaussian noise $\epsilon\sim\mathcal{N}(0, \sigma^2)$ and $\sigma=1/\sqrt{2\pi}$. The source bandit parameters  $\thetabf_{S,m}$ are initialized by adding a random noise vector $\etabf_m$ to the optimal target bandit parameters for every source bandit to be generated $\thetabf_{S,m} = \thetabf^{*} + \etabf_m$. This way we ensure that there is actual information of the target domain in the source bandit parameter. We could also scale $\etabf_m$ to determine how much information the respective source yields about the target domain. The regularization parameter was constantly chosen to be $\lambda=1$ and the initial weights are equally distributed among all available bandit parameters: $\alpha_T=\alpha_{S,m}=\frac{1}{M+1}$. The shown results are the averaged values over 20 runs.
\begin{figure}
\centering
    \begin{subfigure}[t]{0.49\textwidth}
    \centering
         \includegraphics[width=\textwidth]{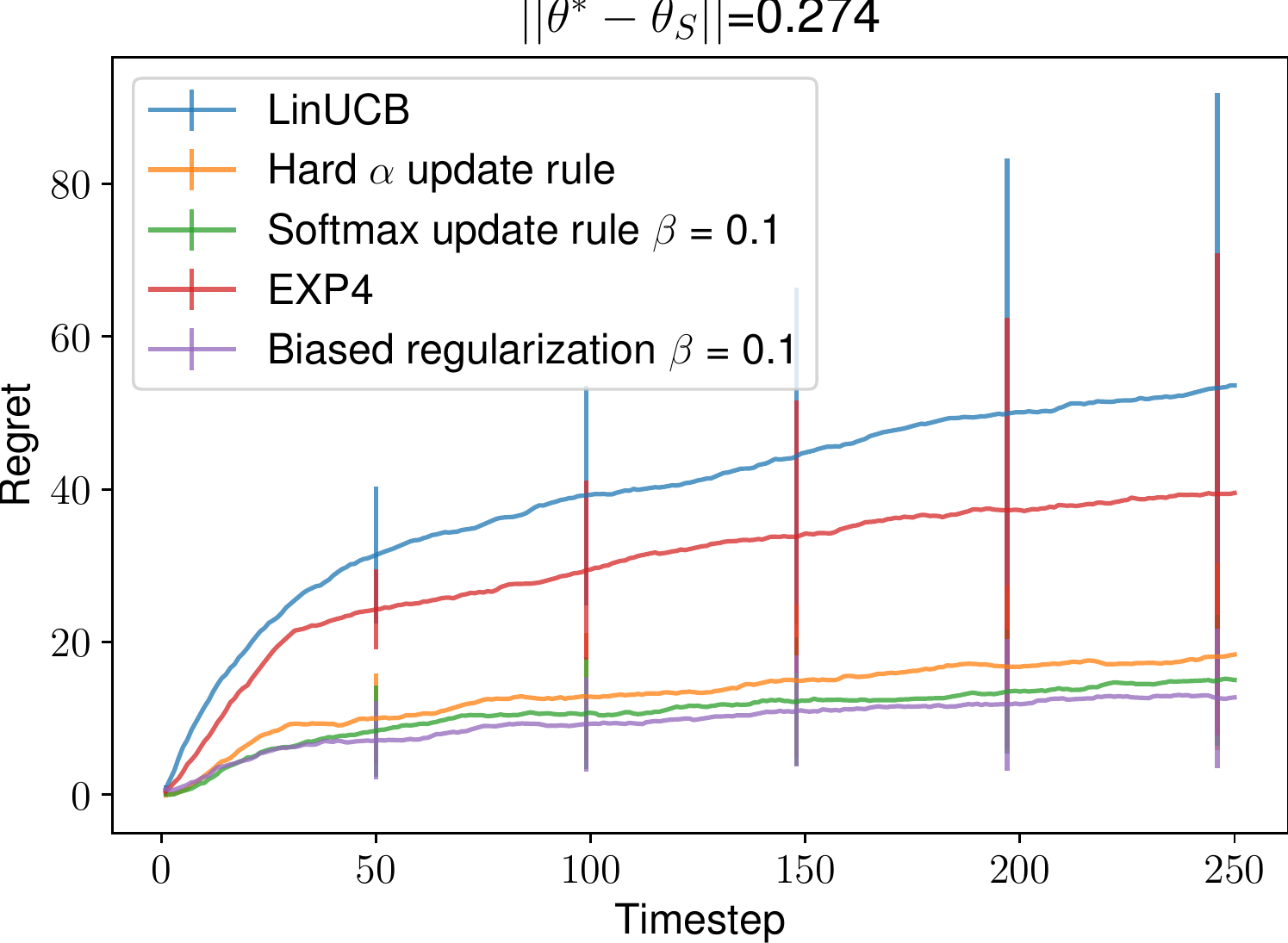}
         \caption{Regret evolution plot labeled by confidence set bound.}
         \label{regret_evo_synth_sing_source}
    \end{subfigure}
    \hfill
        \begin{subfigure}[t]{0.49\textwidth}
    \centering
         \includegraphics[width=\textwidth]{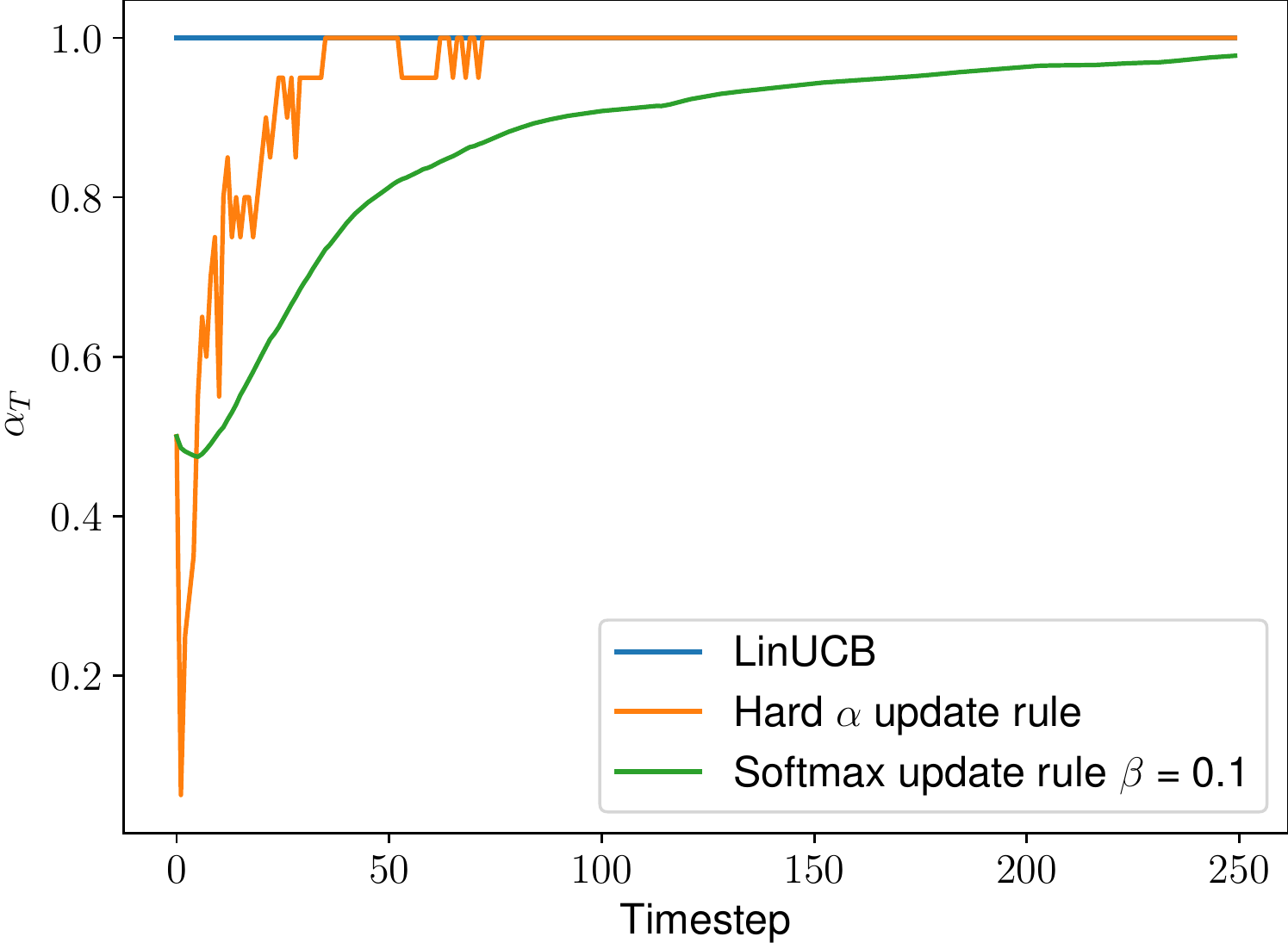}
         \caption{Evolution of the target weight $\alpha_T.$}
    \end{subfigure}
    \label{single_source_plots}
    \caption{Regret and weight evolution for single source transfer scenario on synthetic data sets. The blue lines showcase the classic LinUCB results. The vertical lines indicate the standard deviation.}
\end{figure}

\begin{figure}
\centering
    \begin{subfigure}[t]{0.49\textwidth}
    \centering
         \includegraphics[width=\textwidth]{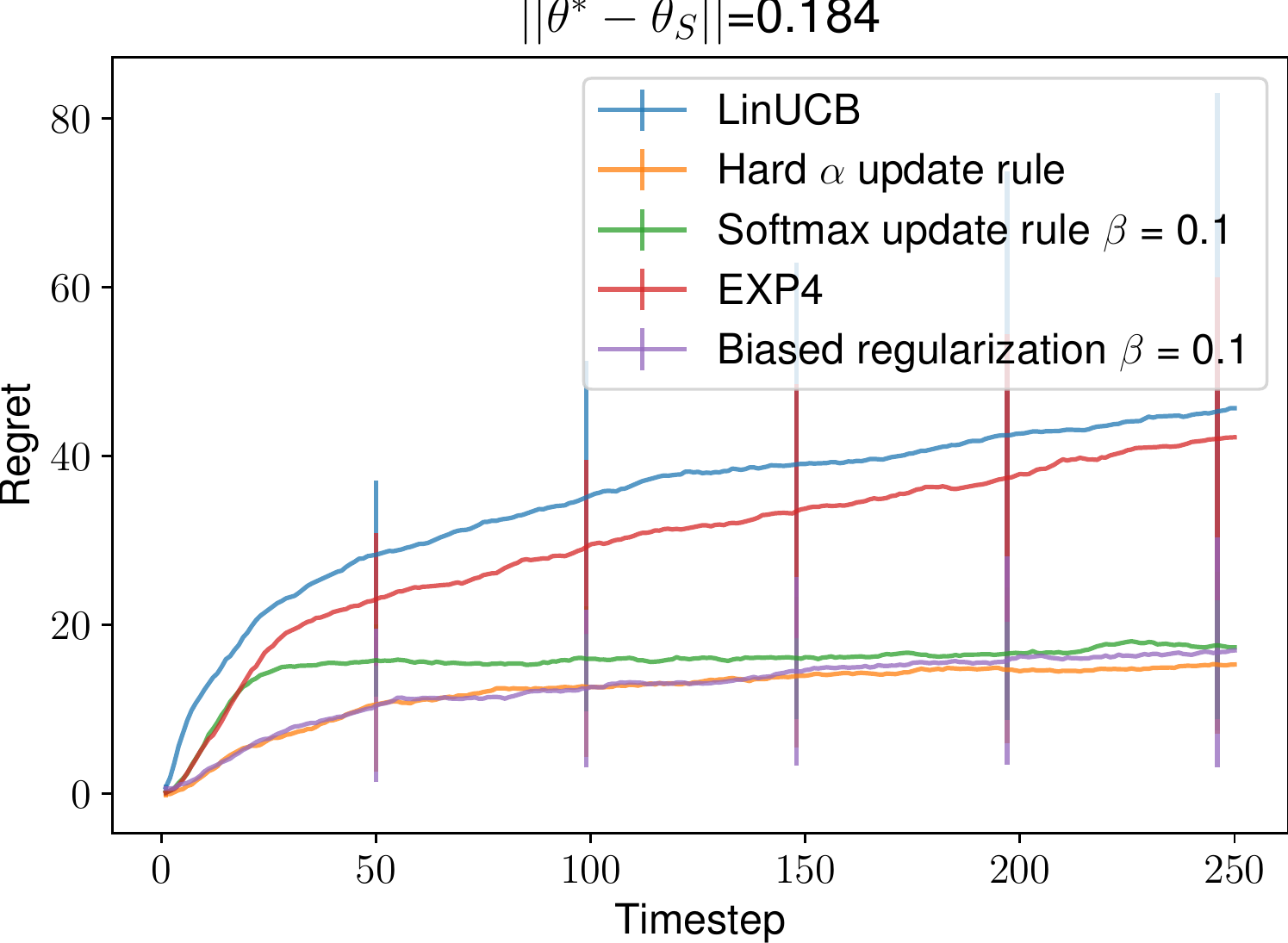}
         \caption{Regret evolution plot labeled by the lowest confidence set bound of all available sources.}
         \label{regret_evo_synth_mult_source}
    \end{subfigure}
    \hfill
        \begin{subfigure}[t]{0.49\textwidth}
    \centering
         \includegraphics[width=\textwidth]{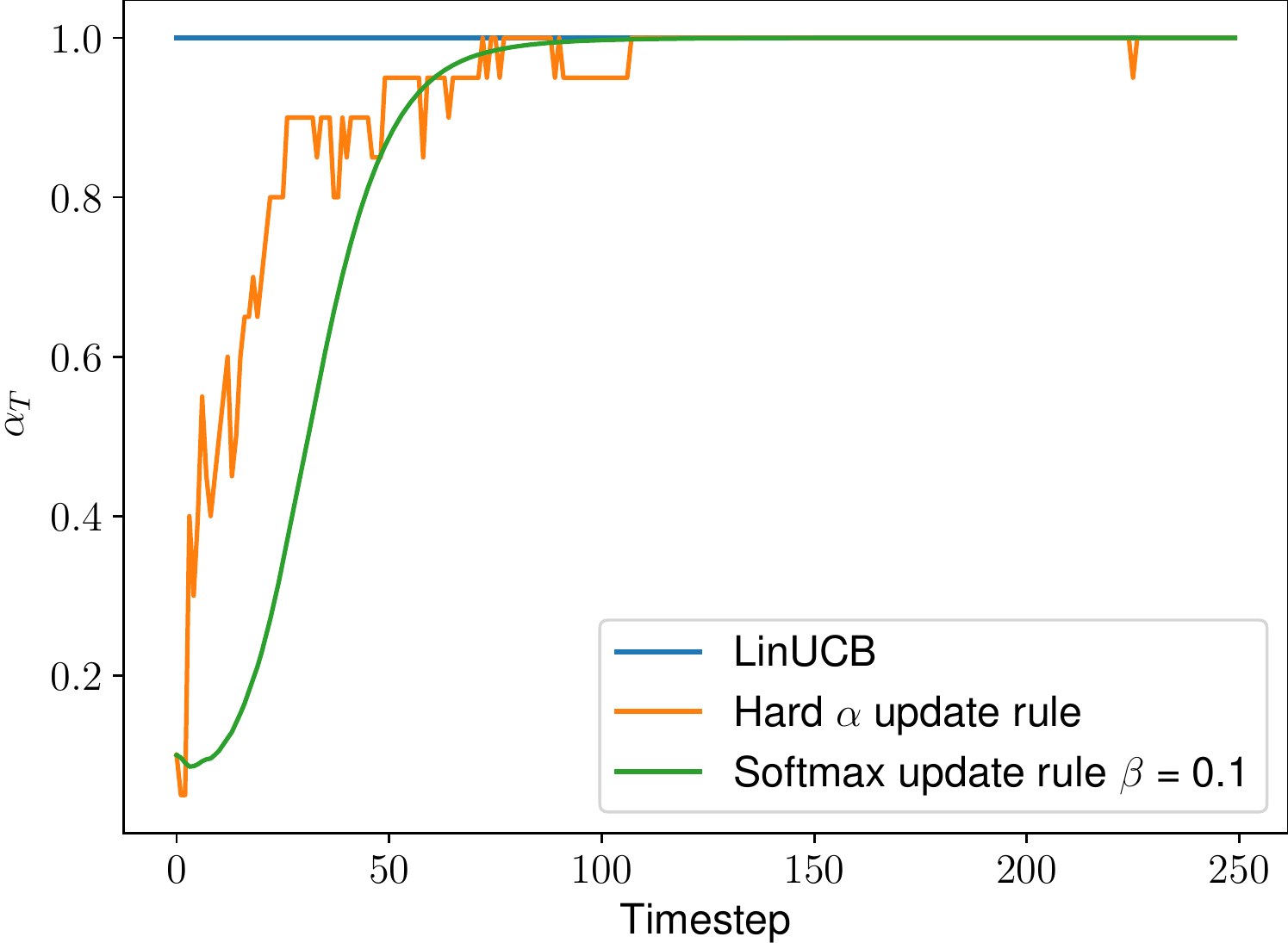}
        \caption{Evolution of the target weight $\alpha_T$. Since multiple sources are present, the initial weight is reduced}
    \end{subfigure}
    \caption{Regret and weight evolution for multiple source transfer scenario (9 sources) on synthetic data sets. The blue lines showcase the classic LinUCB results. The vertical lines indicate the standard deviation.}
    \label{9_sources_plots}
\end{figure}

As we showed in \Cref{sec:weightedLinBandits} the upper regret bound is lower for $\beta \xrightarrow[]{}\infty$ \ie the hard update rule which ignores the KL-divergence in the optimization, but we see overall better results than in the classic case with the softmax update strategy as well. The inclusion of eight more source bandits in Figure \ref{9_sources_plots} improves the sources slightly, though it should be mentioned that all sources generated were similar in quality. Thus we would expect higher improvements in the regret when including significantly better sources. The EXP4 algorithm on the other hand does not perform as well when increasing the number of experts.

\subsection{Real Data Experiments}
The real data sets used for our purposes are taken from the MovieLens sets. Their data include an assemble of thousands of users and corresponding traits such as age, gender and profession as well as thousands of movies and their genres. Every user has a rating from 1 to 5 given to at least 20 different movies. The movies, rated by a user, function as the available arms for that particular user. The information of the movies apart from the title itself are solely given by their genres. Each movie may have up to three different genres and there are 18 different genres in total. Arms, which are linked to the movies, have context vectors depending on the movies genre only. We design 18-dimensional context vector with each dimension representing a genre. If the movie is associated with a particular genre, the respective dimensional feature is set as $x_i=\frac{1}{\sqrt{S}}$ with $S$ as the total number of genres the movie is associated with. This way we guarantee that every context vector is bounded by 1. the reward of an arm in our bandit setting is simply given by the user rating. \\
For our purposes we require source bandits for the transfer learning to take place. Therefore we pretrained a bandit for every single user, given all of the movie information, with the classic LinUCB algorithm and stored the respective parameters. This way every single user can function as a potential source for a different user. With all of the users available we grouped them according their age, gender and profession. We enforce every user to only act as source to other users with similar traits. This stems from a general assumption that people with matching traits may also have similar interests. This is a very general assumption made but given all of the information, it is the easiest way to find likely useful sources for every user. In Figure \ref{real_sources_plots} the results for two individuals of two different groups of users respectively are showcased. Instead of only using one source, we used the multiple source strategy and made use of every user of the same group the individuals are located in, since this way we have a higher chance to find good sources. Even though the real data is far from guaranteed to have a linear reward structure, as well as the fact that important information on the arms' contexts are not available, since ratings usually not only depend on the movie genre, we find satisfying results with converging regrets as well as improved learning rates when including sources.

\begin{figure}
\centering
    \begin{subfigure}[t]{0.49\linewidth}
    \centering
         \includegraphics[width=\textwidth]{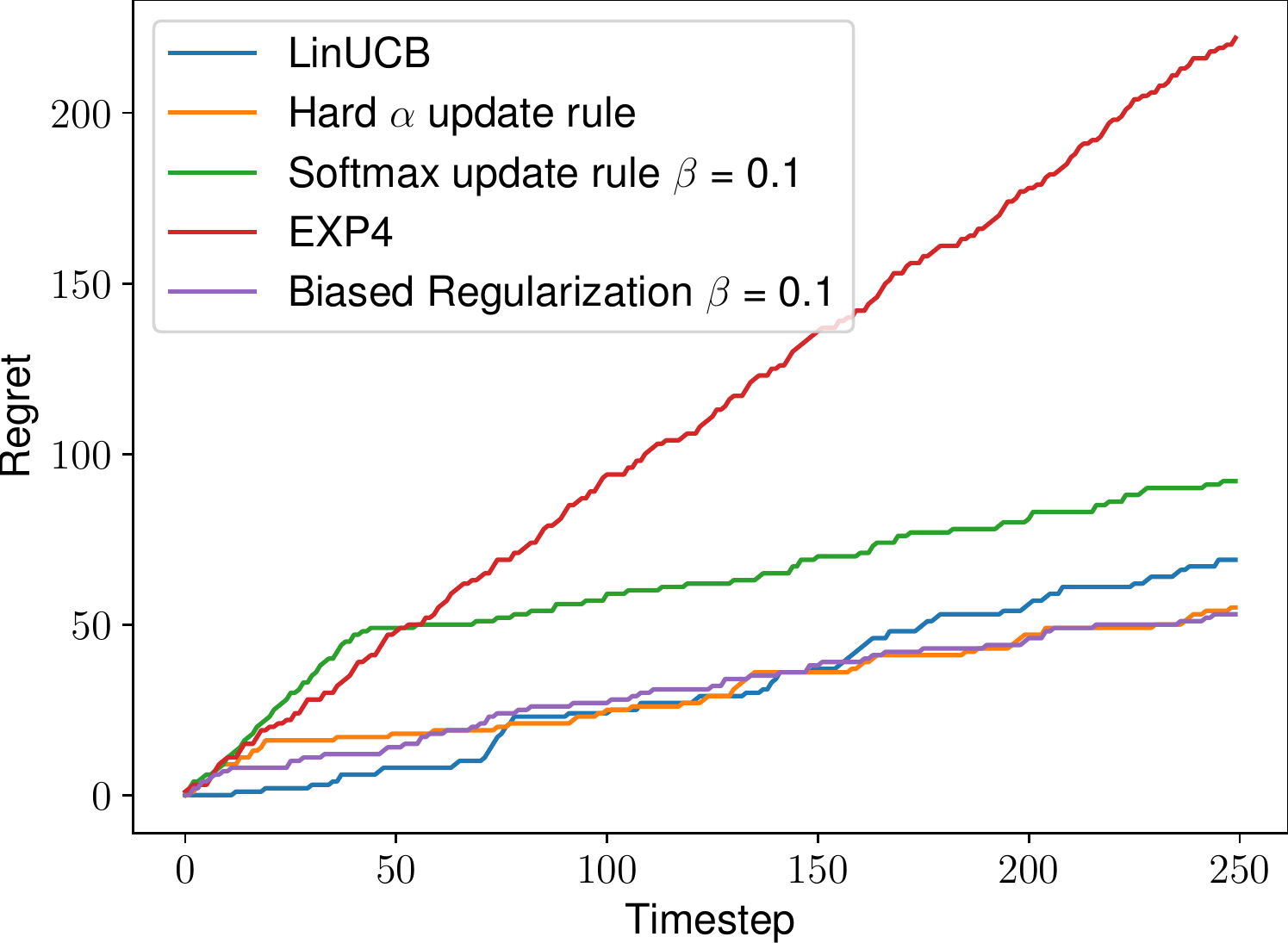}
         \caption{Regret evolution plot with user data taken from the group of 35 to 44 years old female lawyers.}
         \label{regret_evo_real_mult_F_18_0}
    \end{subfigure}
    \hfill
        \begin{subfigure}[t]{0.49\linewidth}
    \centering
         \includegraphics[width=\textwidth]{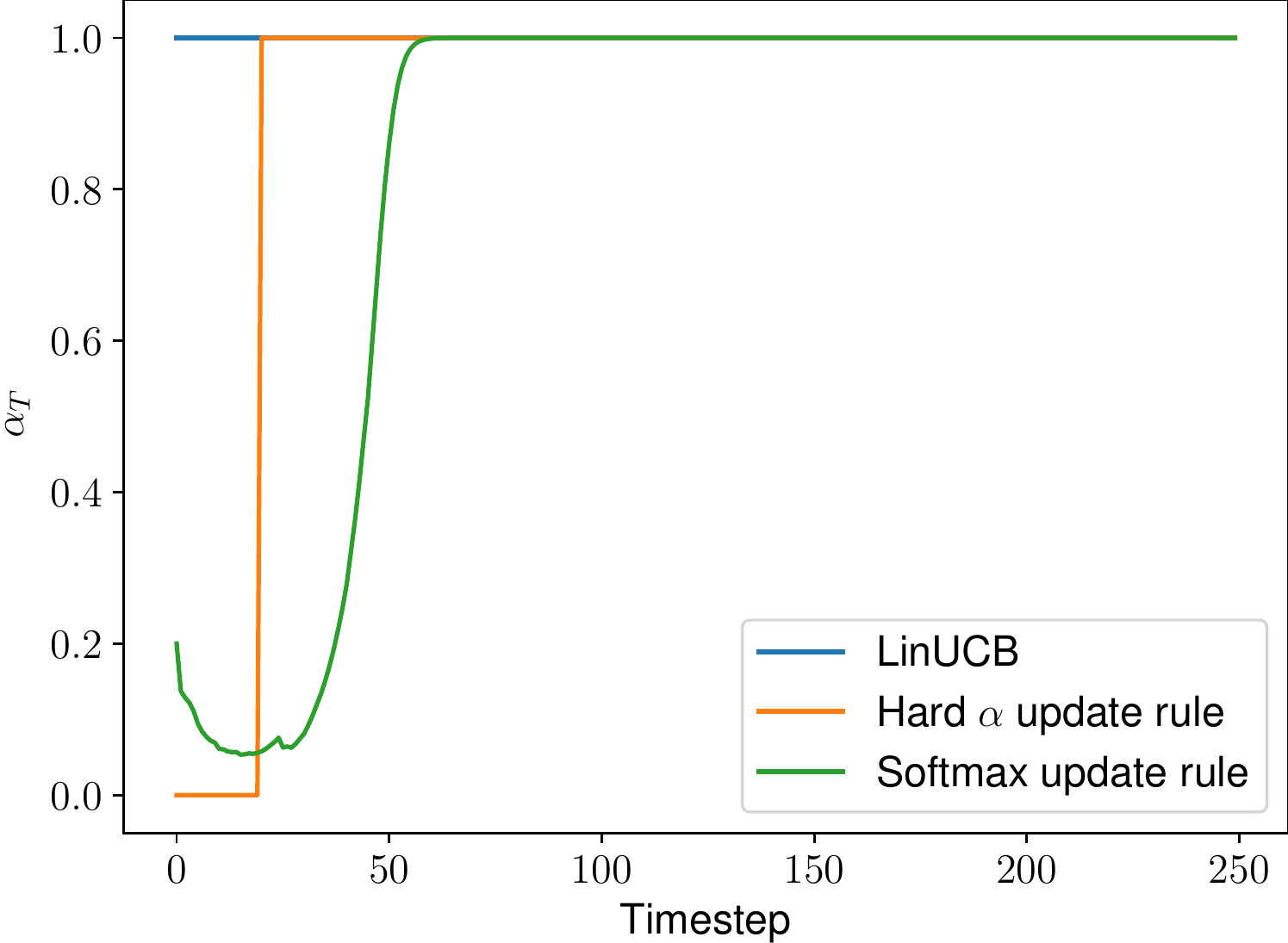}
         \caption{Target weight evolution plot with the respective algorithms labeled with user data taken from the group of 35 to 44 years old female lawyer.}
         \label{regret_evo_real_mult_M_35_3}
    \end{subfigure}
    \caption{Regret evolution for multiple source transfer scenario on real data sets taken from Movielens data. A group of users are shown with one bandit trained for a random user of each group, while the rest of the users act as source to the respective user. The blue lines showcase the classic LinUCB results.}
    \label{real_sources_plots}
\end{figure}

\section{Discussion and Outlook}
This work shows that our approach to make use of information from different tasks, without having actually access to concrete data points, is efficient, given the improved regrets. We have proven an upper regret bound of our weighted LinUCB algorithm with the hard update strategy at least as good as the classic LinUCB bound with a regret rate of $O(d\sqrt{n\log{n}})$, and a converging sub-linear negative-transfer term when using the softmax update strategy. Further argument for the utility of our model was given with synthetic and real data experiments. The synthetic data sets showed promising results especially with the softmax update strategy, even without having a guaranteed improved regret bound. The softmax approach uses a convex combination of models, which might be more practical than using one model at a time especially when it comes to high quality sources. This further raises the question whether different weighting update rules, which yield solutions consisting of a span of source models, might be more efficient for transfer. The inclusion of multiple sources further improved the results, indicating that using information from multiple different tasks is more effective then just one, which aligns with our theoretical result in Theorem \ref{theorem_mult_hard}. The real-world data experiments showed improvements as well, even when considering that the rewards did not necessarily follow a linear model and that the available features for the context vector were rather sparse, the transfer of information from similar users almost always led to lower regrets.

In upcoming projects we intend to adapt our approach to non-linear models such as kernelized bandits, since the convex weighting is not limited to just linear models, as well as give a proper regret bound for the softmax update strategy. There is potential in using our transfer model to non stationary bandits, such that each prior estimation of the bandit parameter may act as source for the current setting, thus making use of the information collected in prior instances of the bandit setting. In this case we would need to make assumptions of the change rate of the tasks after a certain amount of time steps. Previous algorithms on non-stationary bandits \cite{russac2019weighted} perform weighting on data points and discard them after some time steps, without evaluating the benefit of the data  beforehand. In our setting, previously trained bandit parameters would be used according to their performance.

\subsubsection{Acknowledgements}
This work was supported by Grant 01IS20051 from the German Federal Ministry of Education and Research (BMBF). S. Maghsudi is a member of the Machine Learning Cluster of Excellence, EXC number 2064/1 – Project number 390727645. The authors thank the International Max Planck Research School for Intelligent Systems (IMPRS-IS)
for supporting Steven Bilaj.

\newpage
\appendix
\section{Proof of Theorem 1}
We use three lemmas from \cite{abbasi2011improved} used in their regret analysis, as we use the results for our proof.
\begin{lemma} (Self-normalized bound for vector-valued martingales). Let $\tau$ be a stopping time with respect to the filtration $\{F_t\}_{t=0}^\infty$ and define $\S_t=\D^T(\tau)\epsilonbf$, with $\epsilonbf$ as a subgaussian noise vector. Then, for $\delta > 0$, with probability $1-\delta$,

\begin{equation}
    \norm{\S_{\tau}}^2_{\A^{-1}(\tau)}\leq \log(\frac{\det(\A(\tau))}{\delta^2\lambda^d})
\end{equation}
\label{Lemma_vector_martingale}
\end{lemma}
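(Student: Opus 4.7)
The plan is to prove the statement via the classical method of mixtures applied to an exponential supermartingale, combined with Ville's maximal inequality for nonnegative supermartingales. I would not attempt a direct tail-bound argument; the self-normalized form on the left-hand side together with the $\det(\A(\tau))/\lambda^d$ factor on the right strongly suggest Laplace-type mixing against a Gaussian prior, which produces exactly this structure.

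\textbf{Key steps.} First, for every fixed $\mu\in\Rset^d$, I would introduce the exponential process
\begin{equation*}
M_t^{\mu}\;=\;\exp\!\left(\mu^T \S_t \;-\; \tfrac{1}{2}\,\mu^T \D^T(t)\D(t)\,\mu\right)
\end{equation*}
and verify that it is a nonnegative supermartingale with $M_0^{\mu}=1$. The verification uses the conditional subgaussian property of $\epsilonbf$: writing $\S_t-\S_{t-1} = \x_{a_t}\epsilon_t$, the ratio $M_t^{\mu}/M_{t-1}^{\mu}$ has conditional expectation bounded by $1$ because $E[\exp((\mu^T\x_{a_t})\epsilon_t)\mid F_{t-1}]\le \exp(\tfrac{1}{2}(\mu^T\x_{a_t})^2)$, which is exactly cancelled by the quadratic correction in the exponent (absorbing the subgaussian constant into the normalization).

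Second, I would mix $M_t^\mu$ against the Gaussian prior density $h(\mu)=(\lambda/(2\pi))^{d/2}\exp(-\tfrac{\lambda}{2}\norm{\mu}^2)$ and define $\overline{M}_t=\int M_t^{\mu}\,h(\mu)\,d\mu$. By Fubini, $\overline{M}_t$ remains a nonnegative supermartingale with $\overline{M}_0=1$. The mixing integral is Gaussian in $\mu$ and evaluates in closed form to
\begin{equation*}
\overline{M}_t \;=\; \left(\frac{\lambda^d}{\det(\A(t))}\right)^{\!1/2}\exp\!\left(\tfrac{1}{2}\,\norm{\S_t}^2_{\A^{-1}(t)}\right),
\end{equation*}
where $\A(t)=\lambda\I+\D^T(t)\D(t)$. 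Third, I would apply Ville's inequality to $\overline{M}$ at the stopping time $\tau$: $P(\overline{M}_\tau \ge 1/\delta)\le \delta$. Substituting the closed form and taking logarithms yields, on the complementary event of probability at least $1-\delta$,
\begin{equation*}
\norm{\S_\tau}^2_{\A^{-1}(\tau)} \;\le\; \log\!\left(\frac{\det(\A(\tau))}{\delta^2\,\lambda^d}\right),
\end{equation*}
which is the claimed bound.

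\textbf{Main obstacle.} The chief technical subtlety is handling $\tau$ as a possibly unbounded stopping time: one cannot invoke Doob's optional stopping theorem naively. The standard remedy is to work with the stopped process $\overline{M}_{t\wedge\tau}$, apply the maximal inequality for each finite horizon, and then pass to the limit $t\to\infty$ via Fatou's lemma, or equivalently use Ville's inequality in the form that directly permits arbitrary stopping times on nonnegative supermartingales. A secondary but routine point is justifying the interchange of expectation and $\mu$-integration (measurability of $\overline{M}_t$ in $\omega$), which follows from joint measurability of $(\omega,\mu)\mapsto M_t^{\mu}(\omega)$ and Tonelli. Everything else (the Gaussian integral evaluation and the supermartingale verification) reduces to standard manipulations.
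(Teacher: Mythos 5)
The paper offers no proof of this lemma at all—it is imported verbatim from Abbasi-Yadkori et al.\ \cite{abbasi2011improved}—and your proposal correctly reconstructs the canonical method-of-mixtures argument from that reference: the exponential supermartingale $M_t^\mu$, Gaussian mixing which indeed evaluates to $\left(\lambda^d/\det(\A(t))\right)^{1/2}\exp\left(\tfrac{1}{2}\norm{\S_t}^2_{\A^{-1}(t)}\right)$, and Ville's maximal inequality at the stopping time, with the logarithmic rearrangement giving exactly $\log\left(\det(\A(\tau))/(\delta^2\lambda^d)\right)$. Your treatment of the unbounded stopping time via the stopped process and Fatou is also the standard and correct resolution, so nothing is missing.
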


\begin{lemma}
Suppose $\x_{a_1},\x_{a_2},...,\x_{a_n}\in\Rset^d$ and for any $1\leq k\leq n$, $\norm{\x_{a_k}}\leq 1$. Let $\A=\lambda \I +\sum_{k=1}^n\x_{a_k}\x_{a_k}^T$ for some $\lambda > 0$. Then,

\begin{equation}
    \det(\A) \leq (\lambda+n/d)^d
\end{equation}
\end{lemma}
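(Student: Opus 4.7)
The plan is to apply the AM--GM inequality to the eigenvalues of $\A$. Since $\A$ is the sum of the positive definite matrix $\lambda\I$ (with $\lambda > 0$) and the positive semidefinite rank-one matrices $\x_{a_k}\x_{a_k}^T$, it is symmetric and positive definite, so its eigenvalues $\mu_1,\ldots,\mu_d$ are real and strictly positive. This reduces the problem to a purely algebraic inequality on positive numbers whose sum I control.

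The first step is to bound the trace. By linearity and the cyclic property of the trace,
\[
\operatorname{tr}(\A) \;=\; \lambda\,\operatorname{tr}(\I) + \sum_{k=1}^{n}\operatorname{tr}\!\bigl(\x_{a_k}\x_{a_k}^T\bigr) \;=\; \lambda d + \sum_{k=1}^{n}\norm{\x_{a_k}}^2 \;\leq\; \lambda d + n,
\]
where the final inequality invokes the hypothesis $\norm{\x_{a_k}}\leq 1$ for every $k$.

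The second step is to pass from trace to determinant. Using $\det(\A)=\prod_{i=1}^{d}\mu_i$ and $\operatorname{tr}(\A)=\sum_{i=1}^{d}\mu_i$, the AM--GM inequality (applicable because all $\mu_i>0$) yields
\[
\det(\A) \;=\; \prod_{i=1}^{d}\mu_i \;\leq\; \left(\frac{1}{d}\sum_{i=1}^{d}\mu_i\right)^{\!d} \;=\; \left(\frac{\operatorname{tr}(\A)}{d}\right)^{\!d} \;\leq\; \left(\frac{\lambda d + n}{d}\right)^{\!d} \;=\; \left(\lambda + \frac{n}{d}\right)^{\!d},
\]
which is precisely the claim.

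There is no real obstacle here, as the statement is the standard determinant--trace inequality and both steps are one-liners once one commits to the eigenvalue viewpoint. The only small point worth double-checking is that positivity of the $\mu_i$ is strictly needed for AM--GM; this is guaranteed by $\lambda>0$, so the contributions $\x_{a_k}\x_{a_k}^T$ (which are only positive semidefinite) cause no difficulty. An alternative route would be induction on $n$ combined with the matrix determinant lemma $\det(\A+\x\x^T)=\det(\A)(1+\x^T\A^{-1}\x)$, but the eigenvalue argument above is more direct and avoids carrying inductive hypotheses.
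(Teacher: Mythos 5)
Your proof is correct: the trace computation and the AM--GM step on the (strictly positive) eigenvalues are both valid, and together they give exactly the claimed bound. The paper does not prove this lemma itself but cites it from Abbasi-Yadkori et al., where the argument is precisely this determinant--trace inequality, so your approach matches the standard proof.
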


\begin{lemma}
(Confidence Set Bound). Suppose $\x_{a_1},\x_{a_2},...,\x_{a_n}\in\Rset^d$ and for any $1\leq k\leq n$, $\norm{\x_{a_k}}\leq 1$. Let $\A=\lambda \I +\sum_{k=1}^n\x_{a_k}\x_{a_k}^T$ for some $\lambda > 0$ and assume $\norm{\thetabf^*}\leq 1$, with $\hat{\thetabf}_T=\A^{-1}\D^T\y$. Then, for any $\delta > 0$, with probability of at least 1-$\delta$ we have:

\begin{equation}
    \norm{\hat{\thetabf}_T-\thetabf^*}_{\A}\leq \sqrt{d\log(\frac{(\lambda+n/d)}{\lambda})+\log(\frac{1}{\delta^2})}+\sqrt{\lambda} 
\end{equation}
\label{Lemma_confidence_set}
\end{lemma}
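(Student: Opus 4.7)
The plan is to prove the Confidence Set Bound by writing $\hat{\thetabf}_T-\thetabf^*$ as the sum of a regularization bias term and a noise term, then control each term separately using, respectively, $\norm{\thetabf^*}\leq 1$ and the two preceding lemmas already supplied.

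First, I would plug in the definition of $\hat{\thetabf}_T=\A^{-1}\D^T\y$ together with the linear model assumption $\y=\D\thetabf^*+\epsilonbf$. Using $\A=\lambda\I+\D^T\D$, the identity $\A^{-1}\D^T\D=\I-\lambda\A^{-1}$ yields the clean decomposition
\begin{equation*}
\hat{\thetabf}_T-\thetabf^* \;=\; -\lambda\A^{-1}\thetabf^* \;+\; \A^{-1}\S,
\end{equation*}
where $\S\coloneqq \D^T\epsilonbf$ is the vector-valued martingale appearing in \Cref{Lemma_vector_martingale}. Taking the $\A$-norm and applying the triangle inequality splits the bound into a deterministic bias piece and a stochastic noise piece.

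Next, for the bias piece I would observe that $\norm{\A^{-1}\thetabf^*}_{\A}^2=\thetabf^{*T}\A^{-1}\thetabf^*$. Since $\A\succeq \lambda\I$ entails $\A^{-1}\preceq \lambda^{-1}\I$, one obtains $\lambda\norm{\A^{-1}\thetabf^*}_{\A}\leq \sqrt{\lambda}\,\norm{\thetabf^*}\leq \sqrt{\lambda}$, which is exactly the additive $\sqrt{\lambda}$ appearing in the target inequality. For the noise piece, $\norm{\A^{-1}\S}_{\A}=\norm{\S}_{\A^{-1}}$, and \Cref{Lemma_vector_martingale} bounds $\norm{\S}_{\A^{-1}}^2\leq \log(\det(\A)/(\delta^2\lambda^d))$ on an event of probability at least $1-\delta$. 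Feeding the determinant bound $\det(\A)\leq(\lambda+n/d)^d$ from Lemma 2 into this logarithm turns $\log(\det(\A)/(\delta^2\lambda^d))$ into $d\log((\lambda+n/d)/\lambda)+\log(1/\delta^2)$, completing the required control.

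Combining the two pieces via the triangle inequality gives the stated bound. The only non-routine step is the algebraic rewrite via $\A^{-1}\D^T\D=\I-\lambda\A^{-1}$, since everything afterward is a direct application of Cauchy--Schwarz-type manipulations in the $\A$-norm and the two cited lemmas; I do not anticipate a serious obstacle, as the probabilistic heavy lifting is entirely absorbed into \Cref{Lemma_vector_martingale}. The main thing to be careful about is keeping the probability budget intact, i.e.\ the $1-\delta$ event comes solely from the self-normalized martingale bound and the bias term is deterministic, so no union bound is needed.
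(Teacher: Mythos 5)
Your proposal is correct and is essentially the standard argument: the paper itself does not reprove this lemma (it imports it from Abbasi-Yadkori et al.), but the proof it gives for the analogous biased-regularization confidence set (the lemma preceding Theorem 5) uses exactly your decomposition $\hat{\thetabf}_T-\thetabf^*=-\lambda\A^{-1}\thetabf^*+\A^{-1}\D^T\epsilonbf$ via $\A^{-1}\D^T\D=\I-\lambda\A^{-1}$, followed by the same control of the bias term by $\sqrt{\lambda}$ and of the noise term by \Cref{Lemma_vector_martingale} together with the determinant bound. The only cosmetic difference is that the paper routes the final step through $\langle\hat{\thetabf}-\thetabf^*,\x\rangle\leq\norm{\x}_{\A^{-1}}(\cdots)$ with $\x=\A(\hat{\thetabf}-\thetabf^*)$, whereas you apply the triangle inequality in the $\A$-norm directly; these are equivalent.
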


\begin{lemma}
Suppose $\x_{a_1},\x_{a_2},...,\x_{a_n}\in\Rset^d$ is a sequence and for any $1\leq k\leq \infty$, $\norm{\x_k}\leq 1$. Let $\A(n)=\lambda \I +\sum_{k=1}^n\x_k\x_k^T$ for some $\lambda > 0$ then

\begin{equation}
    \sum_{k=1}^n\norm{\x_{a_k}}^2_{A^{-1}(k)} \leq 2d\log(1+\frac{k}{d\lambda})
\end{equation}
\label{Lemma_sum_x}
\end{lemma}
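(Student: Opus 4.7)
The plan is to use the standard telescoping argument based on rank-one determinant updates, feed the result into the preceding determinant bound ($\det(\A(n))\le(\lambda+n/d)^d$), and then convert logs into the desired quadratic sum via an elementary scalar inequality.

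\textbf{Step 1: rank-one determinant identity.} Because $\A(k)=\A(k-1)+\x_{a_k}\x_{a_k}^T$ with $\A(0)=\lambda\I_d$, the matrix-determinant lemma gives
\begin{equation*}
\det(\A(k))=\det(\A(k-1))\bigl(1+\x_{a_k}^T\A^{-1}(k-1)\x_{a_k}\bigr)=\det(\A(k-1))\bigl(1+\norm{\x_{a_k}}^2_{\A^{-1}(k-1)}\bigr).
\end{equation*}
Taking logs and telescoping from $k=1$ to $n$ yields
\begin{equation*}
\sum_{k=1}^{n}\log\bigl(1+\norm{\x_{a_k}}^2_{\A^{-1}(k-1)}\bigr)=\log\!\frac{\det(\A(n))}{\lambda^d}\le d\log\!\Bigl(1+\tfrac{n}{d\lambda}\Bigr),
\end{equation*}
where the inequality invokes Lemma~2 (stated just above in the excerpt).

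\textbf{Step 2: passing from $\A^{-1}(k-1)$ to $\A^{-1}(k)$.} The quantity appearing in the lemma is $\norm{\x_{a_k}}^2_{\A^{-1}(k)}$, not $\norm{\x_{a_k}}^2_{\A^{-1}(k-1)}$. I would use the Sherman--Morrison identity to write
\begin{equation*}
\norm{\x_{a_k}}^2_{\A^{-1}(k)}=\frac{\norm{\x_{a_k}}^2_{\A^{-1}(k-1)}}{1+\norm{\x_{a_k}}^2_{\A^{-1}(k-1)}},
\end{equation*}
and then invoke the elementary inequality $\tfrac{t}{1+t}\le\log(1+t)$ for $t\ge 0$ (both sides vanish at $0$ and the derivative comparison $\tfrac{1}{(1+t)^2}\le\tfrac{1}{1+t}$ is immediate). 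An alternative route, which matches the factor of $2$ in the stated bound more directly, is to combine $\A^{-1}(k)\preceq\A^{-1}(k-1)$ with the inequality $t\le 2\log(1+t)$ valid on $[0,1]$; since $\norm{\x_{a_k}}\le 1$ and the smallest eigenvalue of $\A(k-1)$ is at least $\lambda$, one secures $\norm{\x_{a_k}}^2_{\A^{-1}(k-1)}\le 1/\lambda$ and then applies the scalar bound (after absorbing a factor into $\lambda$ if needed). Either route gives
\begin{equation*}
\norm{\x_{a_k}}^2_{\A^{-1}(k)}\le 2\log\bigl(1+\norm{\x_{a_k}}^2_{\A^{-1}(k-1)}\bigr).
\end{equation*}

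\textbf{Step 3: assembly.} Summing the per-round inequality of Step~2 and applying the telescoping bound of Step~1 yields
\begin{equation*}
\sum_{k=1}^{n}\norm{\x_{a_k}}^2_{\A^{-1}(k)}\le 2\sum_{k=1}^{n}\log\bigl(1+\norm{\x_{a_k}}^2_{\A^{-1}(k-1)}\bigr)\le 2d\log\!\Bigl(1+\tfrac{n}{d\lambda}\Bigr),
\end{equation*}
which is exactly the claimed bound (with $n$ in place of the typographical $k$ on the right-hand side).

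\textbf{Anticipated difficulty.} The telescoping and determinant bound are entirely routine. The only subtle point is justifying the scalar inequality in Step~2 uniformly in $\lambda>0$: if one insists on the factor $2$ exactly as stated, one must either route through Sherman--Morrison (where $t/(1+t)\le\log(1+t)$ holds for all $t\ge 0$ and gives even a tighter constant $1$) or argue that $\norm{\x_{a_k}}^2_{\A^{-1}(k-1)}\le 1$ so that $t\le 2\log(1+t)$ applies. The first route is cleaner and avoids any implicit assumption $\lambda\ge 1$, so I would adopt it and note that the factor $2$ in the statement is then a free slack.
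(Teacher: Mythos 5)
Your proof is correct. The paper does not prove this lemma itself --- it imports it from Abbasi-Yadkori et al.\ --- and your argument is the standard elliptical potential proof from that reference: the rank-one determinant identity, telescoping, the determinant bound of Lemma~2, and a scalar comparison with $\log(1+t)$. Your Sherman--Morrison step is a nice touch: because the statement here uses the post-update matrix $\A^{-1}(k)$ rather than $\A^{-1}(k-1)$, the identity $\norm{\x_{a_k}}^2_{\A^{-1}(k)}=t/(1+t)\le\log(1+t)$ gives the bound with constant $1$ for every $\lambda>0$, so the factor $2$ in the statement (and the usual caveat $\lambda\ge 1$ needed for $t\le 2\log(1+t)$) is indeed pure slack; you are also right that the $k$ on the right-hand side of the stated bound should read $n$.
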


\begin{proof}[Proof of Theorem 1]
First, similar to the confidence set bound in \Cref{Lemma_confidence_set}, we require a confidence set bound for $\thetabf_S$ in the sense of the $\norm{.}_{\A(k)}$ norm. Since it remains constant, determining its confidence set bound is straightforward. Assuming $\norm{\thetabf_S-\thetabf^*}_2= U$, we have

\begin{align*}
    \norm{\thetabf_S-\thetabf^*}_{\A(k)}&=\sqrt{\lambda \norm{\thetabf_S-\thetabf^*}^2_2+\sum_{i=1}^k(\x_{a_k}^T(\thetabf_S-\thetabf^*))^2}\\
    &\leq\sqrt{\lambda \norm{\thetabf_S-\thetabf^*}^2_2 +\norm{\thetabf_S-\thetabf^*}^2_2\sum_{i=1}^k\norm{x_{a_k}}_2^2}\\
    &\leq U \sqrt{\lambda+k},
\end{align*}

where in the last inequality, the Cauchy-Schwarz inequality was used and the fact that $\norm{\x_{a_k}}\leq 1$. For simplicity we define the upper confidence set bounds for $\thetabf_S$ as $\gamma_S$ and for $\hat{\thetabf}_T$ as $\gamma_T$. Now we determine the maximal time step $\kappa$ at which $\gamma_S \leq \gamma_T$ is guaranteed. For this we require a lower bound for $\gamma_T$:

\begin{equation}
    \gamma_T = \sqrt{d\log(1+\frac{k}{\lambda d})+\log(\frac{1}{\delta^2})}+\sqrt{\lambda} \geq \sqrt{\frac{2k/\lambda}{2+k/(\lambda d)}+\log(\frac{1}{\delta^2})},
\end{equation}

where we used $\log(1+x)\geq \frac{2x}{2+x}$ for $x\in(0,\infty)$. With this lower bound we can analytically determine a lower bound $\kappa$:

\begin{align*}
     &U \sqrt{\lambda+\kappa} = \sqrt{\frac{2\kappa/\lambda}{2+\kappa/(\lambda d)}+\log(\frac{1}{\delta^2})} \\
    &\iff U^2(\lambda + \kappa) - \frac{2k}{2\lambda + \kappa/d} - \log(\frac{1}{\delta^2}) = 0 \\
    &\iff \kappa^2 + \left(2\lambda d + \lambda - \frac{\log(\frac{1}{\delta^2})+2d}{U^2}\right)\kappa + 2\lambda d \left(\lambda -\frac{\log(\frac{1}{\delta^2})}{U^2}\right) = 0 
\end{align*}

Which is a quadratic inequality with respect to $\kappa$ and yields the following solution, with the condition $\delta\leq\exp(-2\lambda)$ we retrieve yet another lower bound: 

\begin{align*}
    \kappa &= \sqrt{\left(\frac{d+\log(\frac{1}{\delta})}{U^2}\right)^2+\left[\lambda\left(d-\frac{1}{2}\right)\right]^2+2\lambda\frac{\log(\frac{1}{\delta})(d-\frac{1}{2})-d(d+\frac{1}{2})}{U^2}} \\
    &-\lambda(d+\frac{1}{2})+\frac{d+\log(\frac{1}{\delta})}{U^2}\\
    &\geq \sqrt{\left(\frac{d+\log(\frac{1}{\delta})}{U^2}\right)^2+\left[\lambda\left(d+\frac{1}{2}\right)\right]^2-2\lambda\frac{\log(\frac{1}{\delta})(d+\frac{1}{2})+d(d+\frac{1}{2})}{U^2}} \\
    &-\lambda(d+\frac{1}{2})+\frac{d+\log(\frac{1}{\delta})}{U^2}\\
    &= \sqrt{\left(\frac{d+\log(\frac{1}{\delta})}{U^2}-\lambda\left(d+\frac{1}{2}\right)\right)^2}-\lambda(d+\frac{1}{2})+\frac{d+\log(\frac{1}{\delta})}{U^2}\\
    &= 2\left[d\left(\frac{1}{U^2}-\lambda\right)+\frac{\log(\frac{1}{\delta})}{U^2}-\frac{\lambda}{2}\right]\\
    &\geq\left\lfloor2\left[d\left(\frac{1}{U^2}-\lambda\right)+\lambda\left(\frac{2}{U^2}-\frac{1}{2}\right)\right]\right\rfloor
\end{align*}

Where in the first inequality we used
\begin{align*}
    \lambda^2\left(d+\frac{1}{2}\right)^2-\lambda^2\left(d-\frac{1}{2}\right)^2&\leq2\lambda\frac{\log(\frac{1}{\delta})(d-\frac{1}{2})}{U^2}-\left(-2\lambda\frac{\log(\frac{1}{\delta})(d+\frac{1}{2})}{U^2}\right)\\
\iff    2\lambda^2d&\leq\frac{4\lambda\log(\frac{1}{\delta})d}{U^2},
\end{align*}

which holds when $\delta\leq\exp(-\lambda/2)$.
Next we we give the upper confidence bound of our model:

\begin{equation}
|\x_{a_k}^T\hat{\thetabf} - \x_{a_k}^T\thetabf^*|\leq \norm{\hat{\thetabf}-\thetabf^*}_{\A(k)} ||\x_{a_k}||_{\A^{-1}(k)}=\Delta(k),
\end{equation}

where the Cauchy-Schwarz inequality was used. We denote the exploration term of the UCB as $\Delta(k)$.
We construct the regret as a sum of immediate regrets $\rho(k)$:

\begin{equation}
    R(n)=\sum_{k=1}^n \rho(k),
\end{equation}

as for the immediate regrets, we define the context vector yielding the highest reward $\x_{a^*}$. We then have

\begin{align*}
    \rho(k) &=\x_{a^*}^T\thetabf^* - \x_{a_k}^T\thetabf^*\\
   &\leq \x_{a_k}^T\hat\thetabf + \Delta(k) - \x_{a_k}^T\thetabf^* \\
   &\leq \x_{a_k}^T\hat\thetabf + \Delta(k) - \x_{a_k}^T\hat\thetabf + \Delta(k) \\
   &=2\Delta(k).
\end{align*}

The first inequality makes use of the UCB principle optimism in the face of uncertainty and the second inequality results from the definition of the confidence set used for the exploration term. The resulting total regret can then be bounded:

\begin{align*}
    R(n)\; &= \sum_{k=1}^n \rho(k) \\
    &\leq2\sum_{k=1}^n\Delta(k)=2\sum_{k=1}^n\norm{\hat{\thetabf}-\thetabf^*}_{\A(k)}\norm{\x_{a_k}}_{\A^{-1}(k)} \\
    &\leq 2\sum_{k=1}^n\left[\alpha_{S}(k)||\thetabf_S^*-\thetabf^*||_{\A(k)} + \alpha_{T}(k)||\hat\thetabf_{T}(k)-\thetabf^*||_{\A(k)}\right]||\x_{a_k}||_{\A^{-1}(k)} \\
    &\leq 2\sum_{k=1}^n \alpha_{S}(k)\left(U\sqrt{\lambda+k} - ||\hat\thetabf_{T}(k)-\thetabf^*||_{\A(k)} \right)||\x_{a_k}||_{\A^{-1}(k)} \\
    &+2\sum_{k=1}^n||\thetabf_{T}(k)-\thetabf^*||_{\A(k)}||\x_{a_k}||_{\A^{-1}(k)} \\
    & \leq 2U \sqrt{\lambda + \kappa}\sqrt{\kappa\sum_{k=1}^\kappa||\x_{a_k}||^2_{\A^{-1}(k)}}-R_T(\kappa) + R_T(n) \\
    &\leq U\sqrt{8\kappa(\lambda+\kappa)d\log(1+\frac{\kappa}{d\lambda})}-R_T(\kappa) + R_T(n)
\end{align*}

While we used Lemma \ref{Lemma_confidence_set} in the fourth inequality, resulting into the classic regret and Lemma \ref{Lemma_sum_x} in the last step.
\qed
\end{proof}

\section{Proof of Theorem 2}

\begin{proof}[Proof of Theorem 2]
We assume that $\gamma_S > \gamma_T$ from the very beginning:

\begin{equation}
    R(n) \leq \sum_{k=1}^n\alpha_{S}(k)(\gamma_{S}(k)-\gamma_{T}(k))\norm{\x_{a_k}}_{\A^{-1}(k)} + R_T(n)
\end{equation}

with $R_T(n)$ as the traditional regret bound for LinUCB. We define $\Delta_{\gamma(k)}=\gamma_{S}(k) - \gamma_{T}(k)$. We are taking a closer look at the worst case scenario with $\Delta_{\gamma(k)}>0$ First we show how the weights evolve in the softmax approach:

\begin{equation}
    \alpha_{S}(k)=\frac{1}{1 + Z(k)(\frac{1}{\alpha_{S}(k-1)}-1)}=\frac{1}{1+\prod_{i=1}^{k} Z(i)(\frac{1}{\alpha_{S}(0)}-1)},
\end{equation}

with $Z(k)=\exp(\beta\Delta_{\gamma(k)})$, thus in case $\Delta_{\gamma(k)}>0$ for all $k$ we can further bound the regret as:

\begin{equation}
    R(n)\leq\sum_{k=1}^n\frac{1}{1+\prod_{i=1}^{k} Z(i)(\frac{1}{\alpha_{S}(0)}-1)}\Delta_{\gamma(k)}\norm{\x_{a_k}}_{\A^{-1}(k)}+R_T(n)
\end{equation}

looking at the first sum we know that for large values of $\Delta_{\gamma(k)}$ the sigmoid term decreases rapidly by taking the upper bound: $\frac{1}{1+\prod_{i=1}^k Z(i)(\frac{1}{\alpha_{S}(0)}-1)}\leq\frac{1}{\prod_{i=1}^k Z(i)(\frac{1}{\alpha_{S}(0)}-1)}$ we can minimize this locally by setting $\Delta_{\gamma(k)}=\frac{1}{\beta}$ for the kth summand respectively. thus we can further estimate our upper regret bound such that

\begin{equation}
    R(n) \leq \sum_{k=1}^n\frac{\exp(-\beta\sum_{i=1}^{k-1}\Delta_{\gamma_i})}{e\beta(\frac{1}{\alpha_{S}(0)}-1)}\norm{\x}_{\A^{-1}(k)} + R_T(n),
\end{equation}

From here we will focus on the negative transfer term only. Since we know that $\Delta_{\gamma(i)}$ grows with each time step, as well as in this case it is supposed to be positive for a bad source scenario, we can further estimate:

\begin{equation}
     \sum_{k=1}^n\frac{\exp(-\beta\sum_{i=1}^{k-1}\Delta_{\gamma_i})}{e\beta(\frac{1}{\alpha_{S}(0)}-1)}\norm{\x}_{\A^{-1}(k)} \leq \sum_{k=1}^n\frac{\exp(-\beta (k-1)\Delta_{\mathrm{min}})}{e\beta(\frac{1}{\alpha_{S}(0)}-1)}
\end{equation}

where we used $\norm{x}_{A^{-1}(k)}\leq 1$ and defined $\Delta_\mathrm{min}=\min_k\Delta_{\gamma(k)}$. With the use of the geometric series we finally obtain:

\begin{equation}
    \sum_{k=0}^{n-1}\frac{\exp(-\beta k\Delta_{\mathrm{min}})}{e\beta(\frac{1}{\alpha_{S}(0)}-1)} \leq \frac{(1-\alpha_{T}(0))}{e\beta\alpha_{T}(0)(1-\exp(-\beta\Delta_{\mathrm{min}}))},
    \label{neg_transf_last_step}
\end{equation}

where we changed the sum indices in (\ref{neg_transf_last_step}) and applied the geometric series formula.
\qed
\end{proof}

\section{Proof of Theorem 3}
\begin{proof}[Proof of Theorem 3]
The proof is analogous to theorem 1 with the difference that multiple sources are available. Due to the algorithm it always picks the source with the lowest confidence set bound, denoted by: $\norm{\thetabf_{S,m}-\thetabf^*}_{A(k)}\leq \min_m U_m\sqrt{\lambda+k}=U_\mathrm{min}\sqrt{\lambda+k}$. Using this, the rest of the proof follows the same steps as Theorem 1.
\qed
\end{proof}
\section{Proof of Theorem 4}
\begin{proof}[Proof of Theorem 4]
We assume $\gamma_{S,j}(k) > \gamma_T(k)$ for all $j\in{1,...,M}$ from the very beginning:

\begin{equation}
    R_n \leq \sum_{k=1}^n\sum_j^M\alpha_{S,j}(k)(\gamma_{S,j}(k)-\gamma_{T}(k))\norm{\x_{a_k}}_{\A^{-1}(k)} + R_T
\end{equation}

with $R_T$ as the traditional regret bound for LinUCB. We define $\Delta_{j}(j)=\gamma_{S,j}(k) - \gamma_{T}(k)$. We are taking a closer look at the worst case scenario with $\Delta_{j}(k)>0$ for all $j$. First we show how the weights evolve in the softmax approach:

\begin{align*}
    \alpha_{S, j}(k)&=\frac{1}{1 +\sum_{i\neq j}\frac{\alpha_{S,i}(k-1)}{\alpha_{S,j}(k-1)}\exp(\beta(\gamma_{S,j}(k)-\gamma_{S,i}(k))) + \exp(\beta\Delta_j(k))\frac{\alpha_T(k-1)}{\alpha_{S,j}(k-1)}}\\
    &=\frac{1}{1+\sum_{i\neq j}\exp(\beta\sum_{l=1}^k(\gamma_{S,j}(l)-\gamma_{S,i}(l))) + M\exp(\beta\sum_{l=1}^k\Delta_j(l))(\frac{\alpha_T(0)}{1-\alpha_T(0)})}\\
    &\leq \frac{1}{M\exp(\beta\sum_{l=1}^k\Delta_j(l))(\frac{\alpha_T(0)}{1-\alpha_T(0)})},
\end{align*}

were we assumed that each inital source weight is set to $\alpha_{S,j}(0)=\frac{1-\alpha_T(0)}{M}$, thus in case $\Delta_{j}(k)>0$ for all $j$ and $k$ we can further bound the regret as:

\begin{equation}
    R_n\leq\sum_{k=1}^n\sum_{j=1}^M\frac{1}{M\exp(\beta\sum_{l=1}^k\Delta_j(l))(\frac{\alpha_T(0)}{1-\alpha_T(0)})}\Delta_{j}(k)\norm{\x_{a_k}}_{\A^{-1}(k)}+R_T
\end{equation}

we know that for large values of $\Delta_{j}$ the respective term decreases rapidly we can minimize these locally by setting $\Delta_{j}(k)=\frac{1}{\beta}$ for the kth summand respectively for all sources. Thus we can further estimate the negative transfer term such that

\begin{equation}
    R_n \leq \sum_{k=1}^n\sum_{j=1}^M\frac{\exp(-\beta\sum_{l=1}^{k-1}\Delta_{j}(l))}{eM\beta\frac{\alpha_T(0)}{1-\alpha_T(0)}}\norm{\x_{a_k}}_{\A^{-1}(k)} + R_T.
\end{equation}

From here we will ignore the classic regret term $R_T$ and use it again at the end. Since we know that $\Delta_{j}$ grows with each time step, as well as in this case it is supposed to be positive for a bad source scenario, we can further estimate the negative transfer term:

\begin{equation*}
    \sum_{k=1}^n\sum_{j=1}^M\frac{\exp(-\beta\sum_{l=1}^{k-1}\Delta_{j}(l))}{eM\beta\frac{\alpha_T(0)}{1-\alpha_T(0)}}\norm{\x_{a_k}}_{\A^{-1}(k)} \leq \sum_{k=1}^n\sum_{j=1}^M\frac{\exp(-\beta (k-1)\Delta_{\mathrm{min},j})}{eM\beta\frac{\alpha_T(0)}{1-\alpha_T(0)}}
\end{equation*}

where we used $\norm{x}_{A^{-1}(k)}\leq 1$ and used $\Delta_{\mathrm{min},j}=\min_k\Delta_{j}(k)$. With the use of the geometric series we finally obtain:

\begin{equation}
    \sum_{k=0}^{n-1}\sum_{j=1}^M\frac{\exp(-\beta k\Delta_{\mathrm{min},j})}{eM\beta\frac{\alpha_T(0)}{1-\alpha_T(0)}} \leq\sum_{j=1}^M \frac{(1-\alpha_{T}(0))}{eM\beta\alpha_{T}(0)(1-\exp(-\beta\Delta_{\mathrm{min},j}))},
    \label{neg_mult_transf_last_step}
\end{equation}

where we changed the sum indices in (\ref{neg_mult_transf_last_step}) and applied the geometric series formula.
\qed
\end{proof}

\section{Proof of Theorem 5}
The proof of the next Lemma and Theorem is adapted from \cite{abbasi2011improved}.
\begin{lemma}
Suppose $\x_{a_1},\x_{a_2},...,\x_{a_n}\in\Rset^d$ and for any $1\leq k\leq n$, $\norm{\x_{a_k}}\leq 1$. Let $D=\{x_{a_i}\}_{i=1}^{k-1}$, $\A=\lambda \I +\sum_{k=1}^n\x_{a_k}\x_{a_k}^T$ for some $\lambda > 0$ and assume $\norm{\thetabf^*}\leq 1$. A source bandit parameter $\thetabf_S$ is given as well. With the estimation $\hat{\thetabf}_T=\A^{-1}\D^T\y - (\A^{-1}\D^T\D - \I)\thetabf_S$, then, for any $\delta > 0$, with probability of at least 1-$\delta$ we have:
\begin{equation}
    \norm{\hat{\thetabf}-\thetabf^*}_A \leq \sqrt{d\log(1+\frac{k}{d\lambda})-2\log(\delta)}+\sqrt{\lambda}\norm{\thetabf_S-\thetabf^*}_2
\end{equation}

\begin{proof}
 Suppose $\x_{a_1},\x_{a_2},...,\x_{a_n}\in\Rset^d$ and for any $1\leq k\leq n$, $\norm{\x_{a_k}}\leq 1$. Let $D=\{x_{a_i}\}_{i=1}^{k-1}$, $\A=\lambda \I +\sum_{k=1}^n\x_{a_k}\x_{a_k}^T$ for some $\lambda > 0$ and assume $\norm{\thetabf^*}\leq 1$. A source bandit parameter $\thetabf_S$ is given as well. With the estimation $\hat{\thetabf}_T=\A^{-1}\D^T\y - (\A^{-1}\D^T\D - \I)\thetabf_S$, then, for any $\delta > 0$, with probability of at least 1-$\delta$ we have:
 
\begin{align*}
    \hat{\thetabf}&=\A^{-1}\D^T\y - (\A^{-1}\D^T\D - \I)\thetabf_S \\
    &=\A^{-1}\D^T(\D\thetabf^*+\epsilonbf)-\A^{-1}\D^T\D\thetabf_S+\thetabf_S \\
    &=\thetabf^*-\lambda\A^{-1}\thetabf^*+\A^{-1}\D^T\epsilonbf+\lambda\A^{-1}\thetabf_S
\end{align*}

Next by shifting $\thetabf^*$ to the left as well as applying the Cauchy-Schwarz inequality after doing using a scalar product with $\x$ we get:

\begin{equation}
\langle\hat{\thetabf}-\thetabf^*,x\rangle\leq\norm{x}_{\A^{-1}}(\norm{\D^T\epsilonbf}_{\A^{-1}}+\lambda\norm{\thetabf_S-\thetabf^*}_{\A^{-1}}),
\end{equation}

next by using $\norm{\thetabf_S-\thetabf^*}^2_{A^{-1}}\leq 1/\lambda\norm{\thetabf_S-\thetabf^*}^2_2$, Lemma \ref{Lemma_vector_martingale} and by plugging in $x=A(\hat{\thetabf}-\thetabf^*)$ we get:

\begin{equation*}
    \norm{\hat{\thetabf}-\thetabf^*}^2_A \leq \norm{\hat{\thetabf}-\thetabf^*}_A\left(\sqrt{d\log(1+\frac{k}{d\lambda})+\log(\frac{1}{\delta^2})}+\sqrt{\lambda}\norm{\thetabf_S-\thetabf^*}_2\right)
\end{equation*}

thus as confidence set required for our UCB we get:

\begin{equation}
    \norm{\hat{\thetabf}-\thetabf^*}_A \leq \sqrt{d\log(1+\frac{k}{d\lambda})+\log(\frac{1}{\delta^2})}+\sqrt{\lambda}\norm{\thetabf_S-\thetabf^*}_2
\end{equation}
\qed
\end{proof}
\label{Lemma_biased_confidence_set}
\end{lemma}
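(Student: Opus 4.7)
The plan is to mimic the proof of Lemma~\ref{Lemma_confidence_set} (from \cite{abbasi2011improved}), adapting it to the biased estimator $\hat{\thetabf}=\A^{-1}\D^T\y-(\A^{-1}\D^T\D-\I)\thetabf_S$. The first and main algebraic step is to exploit the identity $\D^T\D=\A-\lambda\I$, which collapses the correction factor: $\A^{-1}\D^T\D-\I=-\lambda\A^{-1}$. Substituting $\y=\D\thetabf^*+\epsilonbf$ and simplifying then yields the clean decomposition
\begin{equation*}
\hat{\thetabf}-\thetabf^*=\A^{-1}\D^T\epsilonbf+\lambda\A^{-1}(\thetabf_S-\thetabf^*),
\end{equation*}
a stochastic martingale term plus a deterministic bias-shrinkage term.

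For an arbitrary $\x\in\Rset^d$, I would then take the inner product with $\x$ and apply Cauchy--Schwarz in the $\A^{-1}$ geometry separately to each of the two summands, obtaining
\begin{equation*}
|\langle\hat{\thetabf}-\thetabf^*,\x\rangle|\leq\norm{\x}_{\A^{-1}}\bigl(\norm{\D^T\epsilonbf}_{\A^{-1}}+\lambda\norm{\thetabf_S-\thetabf^*}_{\A^{-1}}\bigr).
\end{equation*}
Specializing to $\x=\A(\hat{\thetabf}-\thetabf^*)$ turns the left-hand side into $\norm{\hat{\thetabf}-\thetabf^*}_\A^2$ and makes $\norm{\x}_{\A^{-1}}=\norm{\hat{\thetabf}-\thetabf^*}_\A$, so a single division produces
\begin{equation*}
\norm{\hat{\thetabf}-\thetabf^*}_\A\leq\norm{\D^T\epsilonbf}_{\A^{-1}}+\lambda\norm{\thetabf_S-\thetabf^*}_{\A^{-1}}.
\end{equation*}

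It remains to control the two pieces separately. For the stochastic term, Lemma~\ref{Lemma_vector_martingale} together with the determinant bound $\det(\A)\leq(\lambda+k/d)^d$ yields, with probability at least $1-\delta$,
\begin{equation*}
\norm{\D^T\epsilonbf}_{\A^{-1}}\leq\sqrt{d\log\!\bigl(1+k/(d\lambda)\bigr)+\log(1/\delta^2)}.
\end{equation*}
For the bias term, the spectral inequality $\A\succeq\lambda\I$ implies $\A^{-1}\preceq\lambda^{-1}\I$, so $\lambda\norm{\thetabf_S-\thetabf^*}_{\A^{-1}}\leq\sqrt{\lambda}\norm{\thetabf_S-\thetabf^*}_2$. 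Summing the two contributions, and rewriting $\log(1/\delta^2)=-2\log(\delta)$, reproduces exactly the stated bound.

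The only non-routine move is spotting the simplification $\A^{-1}\D^T\D-\I=-\lambda\A^{-1}$; once this rewrites the regularization correction as $\lambda\A^{-1}(\thetabf_S-\thetabf^*)$, everything else is a faithful transcription of the standard self-normalized concentration argument of \cite{abbasi2011improved}, with the bias term dispatched by an elementary spectral comparison rather than any additional probabilistic machinery.
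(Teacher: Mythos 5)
Your proposal is correct and follows essentially the same route as the paper's own proof: the same decomposition $\hat{\thetabf}-\thetabf^*=\A^{-1}\D^T\epsilonbf+\lambda\A^{-1}(\thetabf_S-\thetabf^*)$, the same Cauchy--Schwarz step with the choice $\x=\A(\hat{\thetabf}-\thetabf^*)$, and the same two bounds (the self-normalized martingale lemma for the noise term and $\A\succeq\lambda\I$ for the bias term). Your explicit identification of the identity $\A^{-1}\D^T\D-\I=-\lambda\A^{-1}$ is just a cleaner way of writing the algebra the paper carries out directly.
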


\begin{proof}[Proof of Theorem 5]
We give the upper confidence bound of the biased regularization model:

\begin{equation}
|\x_{a_k}^T\hat{\thetabf} - \x_{a_k}^T\thetabf^*|\leq \norm{\hat{\thetabf}-\thetabf^*}_{\A(k)} ||\x_{a_k}||_{\A(k)^{-1}}=\Delta(k),
\end{equation}

where the Cauchy-Schwarz inequality was used. The next steps, are mostly identical to Theorem 1. We denote the exploration term of the UCB as $\Delta(k)$.
We construct the regret as a sum of immediate regrets $\rho(k)$:

\begin{equation}
    R_n=\sum_{k=1}^n \rho(k),
\end{equation}

as for the immediate regrets, we define the context vector yielding the highest reward $\x_{a^*}$. We then have

\begin{align*}
    \rho(k) &=\x_{a^*}^T\thetabf^* - \x_{a_k}^T\thetabf^*\\
   &\leq \x_{a_k}^T\hat\thetabf + \Delta(k) - \x_{a_k}^T\thetabf^* \\
   &\leq \x_{a_k}^T\hat\thetabf + \Delta(k) - \x_{a_k}^T\hat\thetabf + \Delta(k) \\
   &=2\Delta(k).
\end{align*}

The first inequality makes use of the UCB principle optimism in the face of uncertainty and the second inequality results from the definition of the confidence set used for the exploration term. The resulting total regret can then be bounded:

\begin{align*}
    R_n\; &= \sum_{k=1}^n \rho(k) \\
    &\leq2\sum_{k=1}^n\Delta(k)=2\sum_{k=1}^n\norm{\hat{\thetabf}-\thetabf^*}_{\A(k)}\norm{\x_{a_k}}_{\A(k)^{-1}} \\
    &\leq 2\norm{\hat{\thetabf}-\thetabf^*}_{\A_n}\sqrt{n\sum_{k=1}^n\norm{\x_{a_k}}^2_{A(k)^{-1}}} \\
    &\leq \left(\sqrt{d\log(1+\frac{n}{d\lambda})+\log(\frac{1}{\delta^2})}+\sqrt{\lambda}U\right)\sqrt{8nd\log(1+\frac{n}{d\lambda})}
\end{align*}

While we used Lemma \ref{Lemma_biased_confidence_set} and Lemma \ref{Lemma_sum_x} in the last step as well as $U=\norm{\frac{1}{M}\sum_{j=1}^M U_j}\geq\norm{\thetabf_S-\thetabf^*}_2$.
\qed
\end{proof}
\end{document}